\renewcommand*\backref[1]{\ifx#1\relax \else (Cited on #1) \fi}
\title{Origins of Low-dimensional Adversarial Perturbations}
\renewcommand*\backref[1]{\ifx#1\relax \else (Cited on #1) \fi}
\author{%
  \name Elvis Dohmatob$^1$ \email{dohmatob@fb.com}\\
  \name Chuan Guo$^1$
  \email{chuanguo@fb.com}\\
  \name Morgane Goibert$^2$
  \email m.goibert@criteo.com\\
    \addr{$1$ Facebook AI Research}\\
  \addr{$2$ Criteo AI Lab}
}
\begin{document}

\maketitle

\begin{abstract}

In this paper, we initiate a rigorous study of the phenomenon of low-dimensional adversarial perturbations (LDAPs) in classification. Unlike the classical setting, these perturbations are limited to a subspace of dimension $k$ which is much smaller than the dimension $d$ of the feature space. The case $k=1$ corresponds to so-called universal adversarial perturbations (UAPs; Moosavi-Dezfooli et al., 2017). First, we consider binary classifiers under generic regularity conditions (including ReLU networks) and compute analytical lower-bounds for the fooling rate of any subspace. These bounds explicitly highlight the dependence of the fooling rate on the pointwise margin of the model (i.e., the ratio of the output to its $L_2$ norm of its gradient at a test point), and on the alignment of the given subspace with the gradients of the model w.r.t.inputs. Our results provide a rigorous explanation for the recent success of heuristic methods for efficiently generating low-dimensional adversarial perturbations. Finally, we show that if a decision-region is compact, then it admits a universal adversarial perturbation with $L_2$ norm which is $\sqrt{d}$ times smaller than the typical $L_2$ norm of a data point.
Our theoretical results are confirmed by experiments on both synthetic and real data.
\end{abstract}




\setcounter{tocdepth}{2}

\tableofcontents


\section{Introduction}
Despite their widespread use and success in solving real-life tasks like speech recognition, face recognition, assisted driving, etc., neural networks (NNs) are known to be vulnerable to \textit{adversarial perturbations}, i.e. imperceptible modifications ofinput data causing the model to fail \citep{szegedy2013intriguing}.
\label{subsec:literature}
Our work is motivated by the empirical observation that adversarial examples are abundant in low-dimensional subspaces as evidenced by several query-efficient black-box attacks. \cite{chen2017zoo} used a finite-difference approximation for the gradient to perform gradient-ascent search. This method inspired others such as Boundary Attack~\citep{brendel2017decision}, NES~\citep{ilyas2018black}, SimBA~\citep{guo2019simple} and HopSkipJump~\citep{chen2020hopskipjumpattack} that approximate the full finite-difference gradient via a Monte-Carlo estimate which sub-samples the coordinates randomly. This approach only requires sampling a very small fraction of the totalinput space, e.g., on ImageNet where theinput dimensionality is approximately $150$K, SimBA perturbs as few as $1 665$ random coordinates and succeeds with over $98.6\%$ probability~\citep{guo2019simple}. Subsequent works also performed adversarial search in a \emph{fixed} subspace such as the low-frequency subspace~\citep{yin2019fourier, guo2018low} or by selecting the subspace in a distribution-dependent manner using an independently-trained NN~\citep{tu2019autozoom, yan2019subspace, huang2019black}.

These empirical findings lead us to hypothesize that adversarial perturbations exist with high probability in low-dimensional subspaces. Our work initiates a rigorous study to understand low-dimensional adversarial perturbations (LDAPs). We provide rigorous explanations for the empirical success of some powerful heuristics that have appeared in the literature \citep{unipert,singularart,guo2018low,yin2019fourier,chen2020hopskipjumpattack}.

\subsection{Main contributions}
\label{subsec:contribs}
Our main results are as follows (details are given in Section \ref{sec:summary}):

\textbf{-- Smooth decision-regions.}
First, we consider different realistic notions of smoothness for a binary classifier. These smoothness assumptions allow us to locally linearize the decision-boundary and derive generic lower-bounds on the fooling rate of any subspace $V$ of the feature space $\mathbb R^d$. The bounds reveal (1) the role of the distribution of the pointwise margin of the classifier; (2) the alignment of $V$ with the unit-normals at the decision boundary. For random subspaces of sufficiently high dimension~\citep{guo2019simple} and subspaces obtained via SVD on the gradients ~\citep{unipert,singularart}, our results provide transparent lower-bounds on the fooling rate, which explain the empirical success of the very efficient heuristic methods that have been proposed in the literature for constructing LDAPs.
Moreover, the lower-bounds only depend on the distributions of the predictions and the gradients of the model and so can be empirically estimated on held-out data, making them a practical predictor for the adversarial vulnerability of classifiers. This is confirmed with experiments on different models and datasets.



\textbf{-- Compact decision-regions.}
Next, we establish the existence of universal adversarial perturbations (UAPs) under compactness assumptions. More precisely,
in the case where a decision-region is an arbitrary compact subset of $\mathbb R^d$, a single perturbation vector of size $\sqrt{d}$ times smaller than the typical $L_2$-norm of a typical data point, is sufficient to achieve a fooling rate close to 100\% on the opposite decision-region.
This result is a consequence of the \emph{Riesz-Sobolev rearrangement inequality} \citep{riez}, which allows us to reduce the situation to the case of a spherical decision-region of the same volume.

Our theoretical results are confirmed by numerous experiments on real and simulated data. In all cases, the bounds can be easily evaluated and are close to the actual fooling rates.

\subsection{Literature overview}
Earlier experiments showed that adversarial attacks based on a single direction of feature space (i.e., UAPS) can be designed to effectively fool neural networks \citep{unipert, singularart}.
UAPs are often more transferable across datasets and architectures than classical attacks, making them interesting for use in practice. Their theoretical analysis has been initiated in \cite{moosavi17}, where the authors establish lower-bounds for the fooling rate of UAPs under certain curvature conditions on the decision boundary. The aforementioned work has two fundamental limitations. First, the notions of curvature used are stated in terms of unconstrained optimal adversarial perturbation (i.e., the closest point) for an arbitraryinput point, and thus are not easy to verify in practice. Also, the existence of the UAP is only guaranteed within a subspace which is required to satisfy a global alignment property with the gradients of the model. In contrast, we use a more flexible curvature requirement (refer to Definition \ref{df:viableV}), which is adapted to any subspace under consideration, and prove results that are strong enough to provide a satisfactory theory of LDAPs, and UAPs in particular, under very general settings.

\cite{guo2020threats} studied LDAPs when the attacker is constrained to a uniformly random $k$-dimensional subspace. For classifiers whose decision-regions are half-spaces and spheres in $\mathbb R^d$, they established the existence of low-dimensional adversarial subspaces under a Gaussian concentration assumption on the data.
Our work considers more general decision-regions (e.g. of certain neural networks) and more general data distributions and subspaces. Our results recover the findings of \cite{guo2020threats} as special cases.

\section{Preliminaries}
\paragraph{Notations.} $[N]$ denotes the integers from $1$ to $N$ inclusive, $t_+$ the maximum of $t$ and $0$, $\|u\|$ the $L_2$-norm (unless otherwise stated) of a vector $u$, and $\|A\|_{op}$ denotes the operator norm of a real matrix $A$. The unit-sphere (resp. closed unit-ball) in $\mathbb R^d$ is written $\sphere$ (resp. $B_d$). The orthogonal projection of a vector $z \in \mathbb R^d$ onto the subspace $V \subseteq \mathbb R^d$ is denoted $\Pi_V z$.
As usual, asymptotic notation $F(d) = O(G(d))$ (also written $F(d) \lesssim G(d)$) means there exists a constant $c$ such that $F(d) \le c\cdot G(d)$ for sufficiently large $d$, while $F(d) = \Omega(G(d))$ means $G(d) = O(F(d))$, and $F(d) = \Theta(G(d))$ or $F(d) \asymp G(d)$ means $F(d) \lesssim G(d) \lesssim F(d)$. Finally, $F(d) = o(G(d))$ means $F(d)/G(d) \to 0$ as $d \to \infty$.

\subsection{Binary classification and adversarial examples}
\label{subsect:pbsetup}
We consider a binary classification setup, where $X=(X_1,...,X_d) \in \mathbb{R}^d$ denotes aninput of dimension $d$ drawn from an (unknown) probability distribution $\mathbb{P}_X$ (e.g. for the MNIST dataset, $d=784$). We will denote by $f : \mathbb{R}^d \to \mathbb{R}$ the feature map, and $h = \text{sign} \circ f$ the corresponding classifier. 
The binary classifier $h$ can be unambiguously identified with a measurable subset of $\mathbb R^d$
\begin{eqnarray}
C=\{x \in \mathbb R^d \mid h(x) = -1\} = \{x \in \mathbb R^d \mid f(x) \le 0\},
\label{eq:fC}
\end{eqnarray}
called the negative \emph{decision-region} of $h$. Thus, the complement $C' := \mathbb R^d \setminus C$ of $C$ is the positive \emph{decision-region} of $h$. Of course, the terms "negative" or "positive" are interchangeable, as we can always consider the classifier $-h$ instead.
Therefore, without loss of generality, we shall focus our attention on adversarial attacks on the positive decision-region $C'$. 

For example, for NNs, $f(x)$ would be the predicted \emph{logit}, for a closed ball of radius $r>0$ in $\mathbb R_d$, $f(x) := (\|x\|^2-r^2)/2$, and for a half-space (linear classifier), $f(x) := x^\top w - b$.
 


Given aninput $x \in C'$ classified by $h$ as positive, an adversarial perturbation for $x$ is a vector $\Delta x \in \mathbb R^d$ of size $\|\Delta x\|$ such that $x + \Delta x \in C$. 
The goal of the attacker is to move points from $C'$ to $C$ with small perturbations.
Note that we are not interested in the true labels of theinputs, just the robustness of the classifier w.r.t. its own predictions. However, note that this distinction is not important for classifiers which are already very accurate in the classical sense.

\subsection{Low-dimensional adversarial perturbations (LDAPs)}
In this paper, we focus on \textit{low-dimensional} perturbations (LDAPs)~\citep{guo2018low,guo2019simple,tu2019autozoom, yan2019subspace, huang2019black,guo2020threats}, meaning that the perturbations $\Delta x$ are limited to a $k$-dimensional subspace $V$ of $\mathbb R^d$ whose choice is left to the attacker. The special case where $k=1$ corresponds to the scenario where the attacker is allowed to operate in one dimension only (e.g. modify the same pixel in all images of the same class), also famously known as \emph{universal adversarial perturbations (UAPs)} \citep{unipert,singularart}. 
More generally,  given a subspace $V$ of $\mathbb R^d$, let $C^\varepsilon_V$ be the set of all points in $\mathbb R^d$ which can be pushed into the negative decision-region $C$ by adding a perturbation of size $\varepsilon$ in $V$, that is
\begin{eqnarray}
C^\varepsilon_V := C + \varepsilon B_V = \{x \in \mathbb R^d \mid x + v \in C,\text{ for some }v \in V\text{ with }\|v\| \le \varepsilon \},
\end{eqnarray}
where $B_V := V \cap B_d$ is the unit-ball in $V$.
Note that by definition, $x \in C_V^\varepsilon$ iff $(x + \varepsilon B_V) \cap C \ne \emptyset$. In the particular case of full-dimensional attacks where $V = \mathbb R^d$, the set $C^\varepsilon_V$ corresponds to the usual $\varepsilon$-expansion $C^\varepsilon$ of $C$, i.e., the set of points in $\mathbb R^d$ which are at a distance at most $\varepsilon$ from $C$. This case has been extensively studied in \cite{shafahi2018adversarial,fawzi18,mahloujifar2019curse,dohmatob19}. Note that it always holds that $C \subseteq C_V^\varepsilon \subseteq C^\varepsilon$. 


\begin{restatable}[Fooling rate of a subspace]{df}{}
Given an attack budget $\varepsilon \ge 0$, the fooling rate $\fr(V;\varepsilon)$ of a subspace $V \subseteq \mathbb R^d$ is the proportion of test data which can be moved from the positive decision-region $C'$ to the negative decision-region $C$ by moving a distance $\varepsilon$ along $V$, that is
\begin{eqnarray}
{\rm FR}(V;\varepsilon) := \mathbb  P_X(X \in C^\varepsilon_V \mid X \in C').
\end{eqnarray}
\end{restatable}
Note that by definition of $C_V^\varepsilon$, the fooling rate $\fr(V;\varepsilon)$ is a supremum over all possible attackers operating in the subspace $V$, and with $L_2$-norm budget $\varepsilon$. In particular,
${\rm FR}(\mathbb R^d;\varepsilon)$ is the usual optimal fooling rate of an adversarial attack with budget $\varepsilon$, without any subspace constraint. 

\section{Summary of main results and empirical verification}
\label{sec:summary}


\paragraph{High-level overview.}
We first formalize the notion of an \emph{adversarially viable} subspace which is a subspace $V$ that has a non-negligible inner product with the classifier's gradient, hence it is possible to significantly alter the value of $f(x)$ by moving strictly within $V$. Intriguingly, such subspaces are pivotal to the empirical success of LDAPs, and we show that popular heuristics 
lead to adversarially viable subspaces. Then, we prove that when the classifier satisfies certain smoothness conditions, adversarially viable subspaces allow the attacker to follow the gradient direction within $V$ to reach the decision boundary of $C$ for most points $x \in C'$, hence achieving a high fooling rate. Finally, if 
$C'$ is compact, we can obtain a stronger result that UAPs with high fooling rates also exist.

\subsection{Adversarially viable subspaces}
\begin{wrapfigure}{R}{0.35\textwidth}
\centering
\caption{\label{fig:geom_intuition}Adversarial viability.
}
\includegraphics[width=0.35\textwidth]{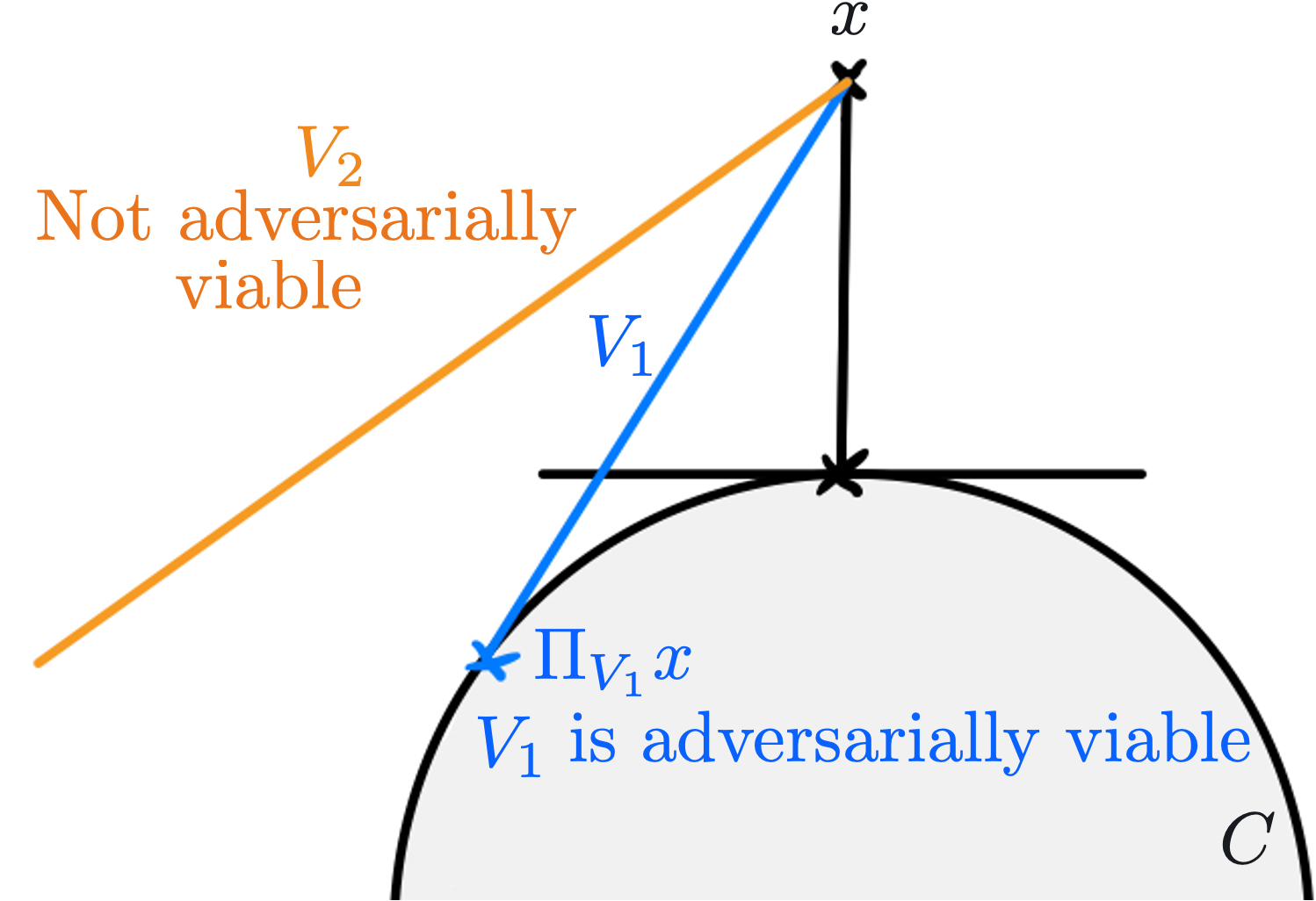} \vspace{-1cm}
\end{wrapfigure}
Restricting the adversarial perturbation to a given subspace $V$ presents a particular challenge to the attacker. 
If $\dim V < d$ and $x \in C':=\mathbb R^d \setminus C \ne \emptyset$, it is possible that $x \not \in C_V^\varepsilon$ for all $\varepsilon>0$. In particular, if a subspace $V$ is orthogonal to the gradient of $f$ at a point $x \in \mathbb R^d$, then no amount of perturbation within $V$ will make $x$ closer to the boundary of $C$. See Fig. \ref{fig:geom_intuition} for underlying geometric intuition. Thus, we can hope to establish nontrivial fooling rates only for certain subspaces.

Our first contribution is a crisp characterization of subspaces for which we can hope to achieve a nonzero fooling rate. These are so-called \emph{adversarially viable} subspaces and are a generalization of the subspaces considered in \cite{moosavi17,unipert} and \cite{guo2020threats}.
\begin{restatable}[Adversarially viable subspace]{df}{}
Given $\alpha \in (0,1]$ and $\delta \in [0,1)$, a possibly random subspace $V \subseteq \mathbb R^d$ is said to be adversarially $(\alpha,\delta)$-viable (w.r.t $C'$) if
\begin{eqnarray}
\mathbb P_{X,V}(\|\Pi_V \eta(X)\| \ge \alpha \mid X \in C')
\ge 1-\delta,
\end{eqnarray}
where $\eta(x):=\nabla f(x)/\|\nabla f(x)\|$ is the gradient direction at $x$.
\label{df:viableV}
\end{restatable}


\paragraph{Examples.}
The now provide some important examples of adversarially viable subspaces.
\begin{restatable}[Random subspaces]{lm}{randomviable}
\label{lm:randomviable}
A uniformly-random $k$-dimensional subspace $V$ of $\mathbb R^d$ is $(\sqrt{k/d}-t,2e^{-t^2d/2})$-viable for any $t \in (0,\sqrt{k/d})$.
\end{restatable}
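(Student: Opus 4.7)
My plan is to exploit the rotational invariance of the Haar measure on the Grassmannian $\mathrm{Gr}(k,d)$ to reduce the statement to a concentration-of-measure bound on the sphere $S^{d-1}$, and then conclude using Lévy's inequality.

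First, I would condition on $X$; then $\eta(X)$ is a deterministic unit vector in $\mathbb{R}^d$, and only $V$ is random. Since the distribution of $V$ is invariant under orthogonal transformations, the law of $\|\Pi_V w\|$ depends on any unit vector $w$ only through its norm. An equivalent (and more convenient) reformulation is: fix $V_0 := \operatorname{span}(e_1, \ldots, e_k)$ and let $u \sim \operatorname{Unif}(S^{d-1})$; then $\|\Pi_V \eta(X)\|$ has the same distribution as $g(u) := \|\Pi_{V_0} u\| = \sqrt{u_1^2 + \cdots + u_k^2}$. In particular, the conditional probability in the statement does not depend on the law of $X$, and it suffices to prove
$$\mathbb{P}_u\bigl(g(u) \ge \sqrt{k/d} - t\bigr) \ge 1 - 2 e^{-t^2 d/2}.$$

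The function $g$ is $1$-Lipschitz on $\mathbb{R}^d$ (as a composition of an orthogonal projection and the Euclidean norm), and by permutation symmetry of $\operatorname{Unif}(S^{d-1})$ we have $\mathbb{E}[g(u)^2] = \mathbb{E}[u_1^2 + \cdots + u_k^2] = k/d$. Lévy's spherical concentration-of-measure inequality applied to $g$ then yields, for every $s > 0$,
$$\mathbb{P}\bigl(g(u) < \mathbb{E} g(u) - s\bigr) \le e^{-d s^2/2}.$$
Integrating this tail bound gives $\operatorname{Var}(g) \lesssim 1/d$, and together with $\mathbb{E}[g^2] = k/d$ this implies $\mathbb{E} g(u) = \sqrt{k/d - \operatorname{Var}(g)} \ge \sqrt{k/d} - O(1/\sqrt{kd})$. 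Setting $s := t - (\sqrt{k/d} - \mathbb{E} g(u))$ and plugging into the concentration bound yields the claimed inequality, with the leading factor of $2$ in the statement absorbing the small centering discrepancy.

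The main obstacle is controlling the $O(1/\sqrt{kd})$ gap between the nominal centering $\sqrt{k/d}$ in the statement and the actual mean $\mathbb{E} g(u)$. In the nontrivial regime $t = \Omega(1/\sqrt{d})$ (outside of which $2 e^{-t^2 d/2} \ge 1$ and the bound is vacuous), this gap is of strictly smaller order than $t$ and can be absorbed into the leading constant. A cleaner alternative is to apply a Marchal--Arbel-style sub-Gaussian proxy-variance bound directly to $g(u)^2 \sim \operatorname{Beta}(k/2, (d-k)/2)$ (which has mean exactly $k/d$), combined with the elementary equivalence $g(u) \ge \sqrt{k/d} - t \iff g(u)^2 \ge (\sqrt{k/d} - t)^2$ valid for $t \in (0, \sqrt{k/d})$, which bypasses the centering step altogether.
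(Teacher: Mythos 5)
Your proposal is essentially the same route as the paper, just with the Johnson--Lindenstrauss concentration bound unpacked rather than black-boxed. The paper's proof is two lines: cite JL to get $\mathbb{P}_V(\|\Pi_V u\| \ge \sqrt{k/d}-t) \ge 1-e^{-dt^2/2}$ for any fixed unit vector $u$, then swap $\mathbb{E}_X$ and $\mathbb{P}_V$ via Fubini. Your reduction---condition on $X$, use rotational invariance to replace ``random $V$, fixed $u$'' with ``fixed $V_0$, random $u$ on $S^{d-1}$,'' then apply L\'evy concentration to the 1-Lipschitz map $g(u) = \|\Pi_{V_0}u\|$---is precisely the standard proof of the JL bound the paper invokes, and the observation that the conditional probability is constant in $X$ is the same Fubini step.

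The one place you diverge from a clean argument is where you openly acknowledge it: L\'evy's inequality concentrates $g$ around its mean (or median), whereas the lemma centers at $\sqrt{k/d} = \sqrt{\mathbb{E}[g^2]}$, which exceeds $\mathbb{E}[g]$ by $\Theta(1/\sqrt{kd})$. Your hand-wave that ``the factor of $2$ absorbs this'' does not quite go through as stated: the mismatch changes the exponent from $dt^2/2$ to $d(t-\delta)^2/2$ with $\delta \asymp 1/\sqrt{kd}$, and $e^{dt\delta}$ can exceed $2$ when $t$ is close to $\sqrt{k/d}$, so it is not purely a leading-constant issue. Your proposed fix via the exact law $g(u)^2 \sim \mathrm{Beta}(k/2,(d-k)/2)$ (whose mean is exactly $k/d$) is the right way to avoid this and would recover the stated constant cleanly; it is also morally what the cited versions of JL do. In short, the approach is correct and matches the paper; you have simply exposed (and correctly diagnosed) a centering subtlety that the paper sweeps under the JL citation.
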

Such subspaces have been proposed in
\cite{unipert,guo2020threats} as a black-box technique for generating LDAPs

Let $\Sigma_\eta \in \mathbb R^{d \times d}$ be the covariance matrix of the gradient direction $\eta(X)$ conditioned on $X \in C'$.
\begin{restatable}[Gradient eigen-subspaces]{thm}{svd}
\label{thm:svd}
For any $k \in [d]$, let $s_k \in (0,1]$ be the sum of first the $k$ eigenvalues of $\Sigma_\eta$. Then,  for any $\alpha \in (0,\sqrt{s_k})$, the (deterministic) subspace $V_{\mathrm{eigen},k}$ of $\mathbb R^d$ corresponding to the $k$ largest eigenvalues of $\Sigma_\eta$ is adversarially $(\alpha,(1-s_k)/(1-\alpha^2))$-viable.
\end{restatable}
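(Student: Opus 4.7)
My approach is a one-line Markov argument applied to the nonnegative random variable $Z := 1 - \|\Pi_V \eta(X)\|^2$, which lies almost surely in $[0,1]$ because $\eta(X)$ is a unit vector and $\Pi_V$ is an orthogonal projection. The event $\{\|\Pi_V \eta(X)\| < \alpha\}$ coincides with $\{Z > 1 - \alpha^2\}$, and since $V = V_{\mathrm{eigen},k}$ is deterministic, Markov's inequality conditional on $X \in C'$ gives
\begin{equation*}
\mathbb P_X(\|\Pi_V \eta(X)\| < \alpha \mid X \in C') \;\le\; \frac{1 - \mathbb E_X[\|\Pi_V \eta(X)\|^2 \mid X \in C']}{1 - \alpha^2}.
\end{equation*}
It therefore suffices to establish the second-moment lower bound $\mathbb E_X[\|\Pi_V \eta(X)\|^2 \mid X \in C'] \ge s_k$.

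\textbf{Key computation.} Rewriting the squared projected norm as a trace via $\|\Pi_V u\|^2 = u^\top \Pi_V u = \mathrm{trace}(\Pi_V u u^\top)$ and interchanging trace with the conditional expectation,
\begin{equation*}
\mathbb E_X[\|\Pi_V \eta(X)\|^2 \mid X \in C'] = \mathrm{trace}(\Pi_V M), \qquad M := \mathbb E_X[\eta(X)\eta(X)^\top \mid X \in C'] = \Sigma_\eta + \mu \mu^\top,
\end{equation*}
where $\mu := \mathbb E_X[\eta(X) \mid X \in C']$. By the defining property of $V_{\mathrm{eigen},k}$ as the $\Sigma_\eta$-invariant subspace spanned by the top $k$ eigenvectors, $\mathrm{trace}(\Pi_V \Sigma_\eta)$ equals the sum of the $k$ largest eigenvalues of $\Sigma_\eta$, namely $s_k$. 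Since $\mathrm{trace}(\Pi_V \mu\mu^\top) = \|\Pi_V \mu\|^2 \ge 0$, I conclude $\mathbb E_X[\|\Pi_V \eta(X)\|^2 \mid X \in C'] \ge s_k$, and substituting into the Markov bound above yields
\begin{equation*}
\mathbb P_X(\|\Pi_V \eta(X)\| \ge \alpha \mid X \in C') \;\ge\; 1 - \frac{1 - s_k}{1 - \alpha^2},
\end{equation*}
which is exactly the $(\alpha, (1 - s_k)/(1 - \alpha^2))$-viability statement.

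\textbf{Main subtlety.} There is no substantive obstacle --- the whole argument is a page or so. The only point worth flagging is that $\Sigma_\eta$ is defined as a \emph{covariance} rather than a second-moment matrix, so the rank-one correction $\mu\mu^\top$ must be kept track of; happily, discarding it only strengthens the inequality $\mathbb E[\|\Pi_V \eta(X)\|^2 \mid X \in C'] \ge s_k$, so the stated bound is valid and is in fact slightly conservative whenever $\mu$ has nonzero component along $V_{\mathrm{eigen},k}$. The hypothesis $\alpha \in (0, \sqrt{s_k})$ is used in two places: it ensures $1 - \alpha^2 > 0$ so the Markov step is nontrivial, and it guarantees that the fraction $(1 - s_k)/(1 - \alpha^2)$ lies in $[0,1)$, so that the resulting $\delta$ is a valid viability parameter.
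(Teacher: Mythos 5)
Your proof is correct and takes essentially the same route as the paper: both hinge on computing $\mathbb E[\|\Pi_V\eta(X)\|^2\mid X\in C']$ via a trace identity and then converting the second-moment lower bound into a tail bound for a $[0,1]$-valued variable (the paper uses the pointwise inequality $\mathbbm{1}(Z\ge\alpha)\ge (Z^2-\alpha^2)/(1-\alpha^2)$, which upon taking expectations is the same as your Markov step on $1-Z^2$). In fact you are slightly more careful than the paper's own write-up, which asserts $\mathbb E[\eta(X)^\top\Pi_V\eta(X)\mid X\in C']=\mathrm{trace}(\Pi_V\Sigma_\eta)$ even though $\Sigma_\eta$ is a covariance rather than a second-moment matrix; your observation that the dropped $\mu^\top\Pi_V\mu$ term is nonnegative is exactly what makes the paper's bound valid as an inequality.
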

Thus, if the histogram of eigenvalues of $\Sigma_\eta$ is "spiked" in the sense that $s_k \ge s = \Omega(1)$ for some $k=o(d)$, then $V_{\mathrm{eigen},k}$ is a $o(d)$-dimensional adversarially $(\Omega(1),O(1-s))$-viable subspace! This provides a rigorous justification for the heuristic in \cite{unipert,singularart} which proposed UAPs based on eigenvectors of the covariance matrix $\Sigma_\eta$. Our experiments in Section \ref{subsec:experiments} also support this.

\subsection{Smooth decision-regions}
\label{subsec:smoothsummary}
We consider the case where $f$ is differentiable almost everywhere and satisfies
\begin{restatable}[Strong gradients]{cond}{}
\label{cond:beta}
$\mathbb P_X(\|\nabla f(X)\| \ge \beta \mid X \in C')\ge 1-\gamma$, for some $\beta>0$, $\gamma \in [0,1)$.
\end{restatable}
This condition
ensures that there is a strong descent direction at a constant fraction of points in the positive decision-region $C'$, to allow for gradient-based attacks. 

We combine the above condition with different notions of smoothness of the gradient $\nabla f$, namely:

-- \emph{Lipschitzness} (Section \ref{sec:smooth}), where the gradient map $x \mapsto \nabla f(x)$ is assumed to be Lipschitz continuous on the positive decision-region $C':=\mathbb R^d\setminus C$, i.e. $\|\nabla f(x')-\nabla f(x)\| \le L\|x'-x\|$ for all $x,x' \in C'$ and some $L \in [0,\infty)$.
    For example, this is the case when $f$ is a hyper-ellipsoid, an affine function, a feed-forward neural network with bounded weights and a twice-differentiable activation function with bounded Hessian (e.g. sigmoid, quadratic, tanh, GELU~\citep{hendrycks2016gaussian}, cos, sin), etc.
    
    
-- \emph{Local-flatness} (Section \ref{sec:local}), where $\nabla f(x)$ is assumed to be nearly constant on a neighborhood of radius $R \in (0,\infty]$ around each point $x \in C'$. This is the case of ReLU neural networks at initialization and includes neural nets in the random features regime where only the output layer is trained \cite{daniely2020most,bubecksinglestep2021,bartlett2021}, and also neural networks in the so-called \emph{lazy} regime~\cite{RuiqiGao2019}. We also empirically observe that this is the case of fully-trained feedforward neural networks with ReLU activations.

\textbf{Results.}
Let $m_f:\mathbb R^d \to \mathbb R_+$ be the \emph{margin} of the classifier \eqref{eq:fC}, defined by
\begin{eqnarray}
m_f(x) := \max(f(x),0)/\|\nabla f(x)\| = \begin{cases}
f(x)/\|\nabla f(x)\|,&\mbox{ if }x \in C',\\
0,&\mbox{ if }x \in C.
\end{cases}
\label{eq:margin}
\end{eqnarray}
Given a possibly random adversarially $(\alpha,\delta)$-viable subspace $V$ of $\mathbb R^d$, we establish in Theorem \ref{thm:smoothboundary} and Theorem \ref{thm:almostconstantgrads} lower-bounds on the fooling rate of the form
    \begin{equation}
        \mathbb E_V[\fr(V;\varepsilon)] \gtrsim \mathbb P(m_f(X) \le \overline\alpha\varepsilon \mid X \in C')-\delta-\gamma.
        \label{eq:generalbound}
    \end{equation}

Here,
$\overline\alpha \in (0, 1]$ depends on $\alpha$, $\beta$ and the smoothness of $f$ (i.e., on $L$, etc.). Importantly, the generic bound \eqref{eq:generalbound} explicitly highlights the dependence of the fooling rate on the pointwise margin of the classifier and on the alignment of the given subspace with the gradients of the model w.r.t.inputs.


\textbf{Some consequences.}
The $L_2$-norm $R$ of a typical data point is of order $\sqrt{d}$,
while the margin $m_f(X)$ is typically of order $\mathcal O(1)$; this is formally proved in \cite{daniely2020most,bartlett2021} in the case of networks at initialization, and empirically observed in \cite{bengiomargin} for fully-trained networks. Also, as observed in \cite{unipert}, the singular-values of the gradient covariance matrix $\Sigma_\eta$ are typically long-tailed.
Thus, combining with Theorem \ref{thm:svd}, our results predict that for sufficiently large $k \ll d$, the subspace spanned by the top $k$ singular-vectors of $\Sigma_\eta$ has a nonzero fooling rate with attack budget $\varepsilon \asymp 1/\alpha=O(1)$ which is $\sqrt{d}/\varepsilon \gtrsim \sqrt{d}$ times smaller than $R$, the $L_2$-norm of a typical data point. On the other hand, in view of Lemma \ref{lm:randomviable}, for $k \asymp d^{1/2+o(1)}$, a random $k$-dimensional subspace of $\mathbb R^d$ has nonzero fooling rate with $\varepsilon \asymp 1/\alpha = d^{1/4+o(1)}$, i.e. $\sqrt{d}/\varepsilon \asymp d^{1/4-o(1)}$ times smaller than $R$.


\subsection{Compact decision-regions}
\label{subsec:compactsummary}
We consider the scenario where the positive decision-region $C' := \mathbb R^d \setminus C$ of the classifier is a compact subset of $\mathbb R^d$, equipped with the \emph{uniform / volume} measure. In this case, we establish in Theorems \ref{thm:ballcase} and \ref{thm:steiner}, the existence of universal adversarial perturbations (UAPs) that have a fooling rate close to $100\%$, with an attack budget $\varepsilon$ which is $\sqrt{d}$ times smaller than the typical $L_2$-norm of a data point. Moreover, these UAPs can be selected completely at random, without any information from the classifier.
Our proof uses the classical \emph{Riesz-Sobolev rearrangement inequality} \citep{riez}, which allows us to reduce the situation to the case where $C'$ is a ball of the same volume.
The lower-bounds we obtain are then a direct consequence of the \emph{curse of dimensionality}. 

We conjecture that the uniformity assumption on the distribution of the data in the positive-decision region can probably be replaced by assuming that the distribution of $X$ conditioned on $X \in C'$ has density which is bounded-away from zero. This extension is left for future work.


\subsection{Empirical verification}
\label{subsec:experiments}
\paragraph{Smooth decision-regions.}
Our results are empirically verified in Fig. \ref{fig:relu} (random subspace attacks) and Fig. \ref{fig:mnist-lenet} (singular subspace attack).
\begin{figure}[!h]
    \centering
    \begin{subfigure}{}
    \includegraphics[height=0.044\textwidth]{{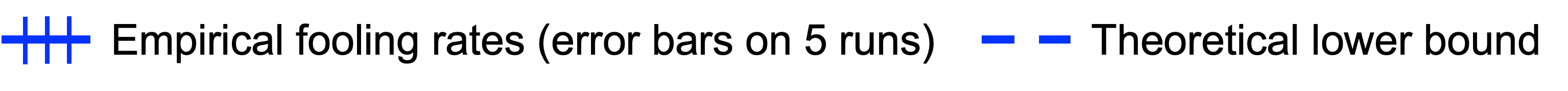}}
    \end{subfigure}
    \vspace{-.1cm}
    \centering
    \begin{subfigure}{Two-layer ReLU NN at initialization:input dim. $d = 784$, width $d_1=100$. Simulated data.}
    \includegraphics[width=1.0\textwidth]{{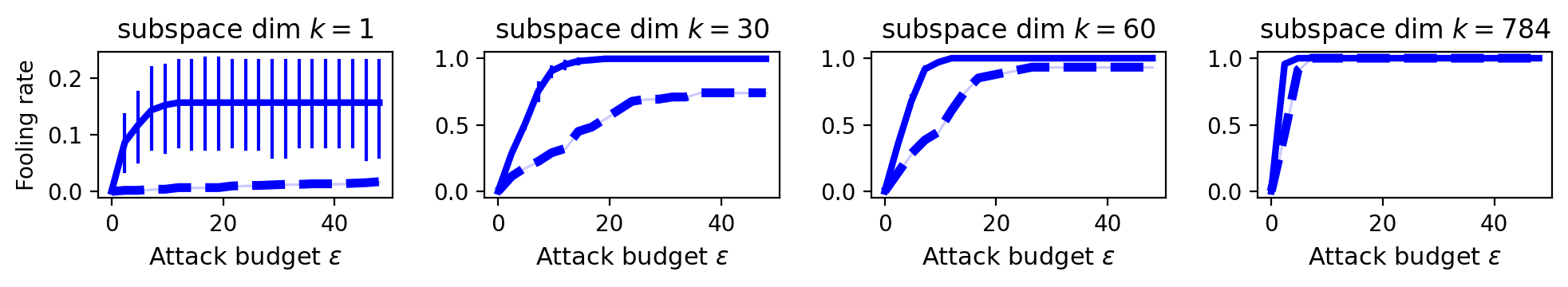}}
    \end{subfigure}
    \vspace{-.1cm}
     \begin{subfigure}{Two-layer ReLU NN in RF regime:input dim $d = 784$, width $d_1=100$. Simulated data.}
    \includegraphics[width=1\textwidth]{{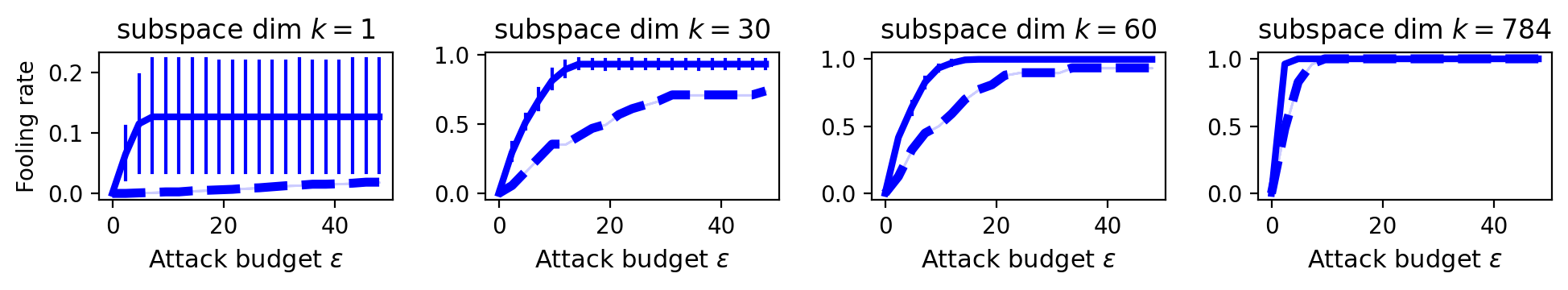}}
    \end{subfigure}
    \vspace{-.1cm}
     \begin{subfigure}{Full-trained LeNet (conv layers + dense layers + ReLU activation) on MNIST dataset.}
    \includegraphics[width=1\linewidth]{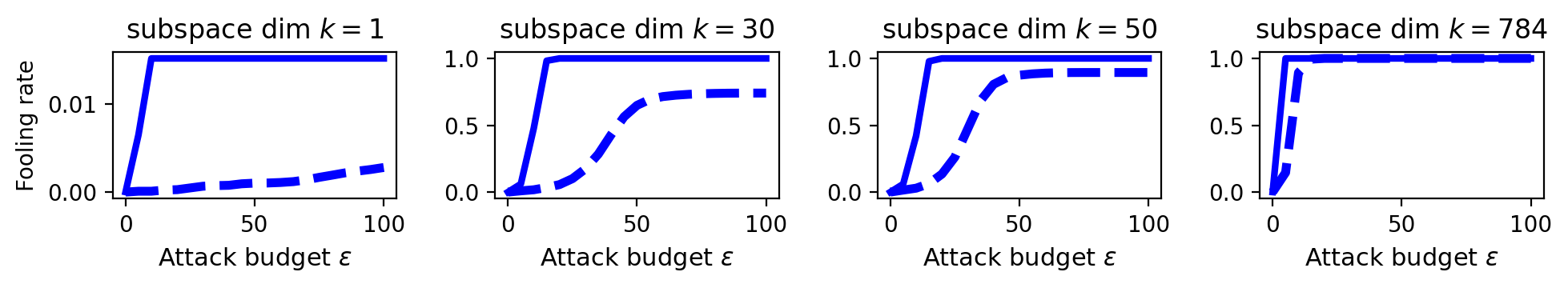}
    \end{subfigure}
    \vspace{-.5cm}
    \caption{(Random subspace attack) Empirical confirmation of our results. Broken lines correspond to our theoretical lower bound. Solid curves correspond to empirically computed fooling rates, with error-bars accounting for randomness in the initialization of the network, over $5$ independent runs.
    }
    \label{fig:relu}
\end{figure}
In Fig. \ref{fig:relu} (first and second row), the distribution $P_X$ of the features is $N(0,I_d)$, and the training labels are given from a simple linear model:  $y_i = x_{ij}$. For MNIST data~\citep{lecun-mnisthandwrittendigit-2010} (third row), we construct a binary classification problem by restricting it to the digits 0 and 8. As in \cite{guo2020threats}, we run FGSM attacks on a randomly chosen subspace $V$ (of different dimensions) of the feature space $\mathbb R^d$, and report the fooling rates (solid lines) and compare them with our proposed lower-bounds \eqref{eq:generalbound}. As we can see from the figure, in all the cases, the lower-bounds (broken lines) are verified.

In Fig. \ref{fig:mnist-lenet}, we consider the same experimental setting in Fig. \ref{fig:relu}.  We use $n=1000$ random examples $x_1,\ldots,x_n$, and compute the empirical covariance matrix $\widehat{\Sigma}_\eta := (n-1)^{-1}\sum_{i=1}^n(\eta_i-\overline{\eta})(\eta_i-\overline{\eta})^\top$, of the gradient directions $\eta_i := \eta(x_i)$, with $\overline{\eta} := (1/n)\sum_{i=1}^n\eta_i$. As in \cite{singularart}, we extract the top eigenvector of $\widehat\Sigma_\eta$ and use it as a universal perturbation vector for a separate test set. In the leftmost subplot, we show a histogram of eigenvalues. Notice how the largest eigenvalue for each model is much larger than the other eigenvalues. Thanks to Theorem \ref{thm:svd}, this means that the principal eigenvector $v$ spans an adversarially viable subspace. This is confirmed in the 2nd, 3rd, and 4th subplots where we see that fooling rate rises rapidly as a function of the attack budget $\varepsilon$. We see from the figure that our predicted lower-bounds (broken) lines are satisfied in all cases.

Full details of the experimental setup and code for reproducing the results are provided as part of the supplementary materials.

\begin{figure}[!ht]
    \centering
    \begin{subfigure}{}
    \includegraphics[height=0.044\textwidth]{{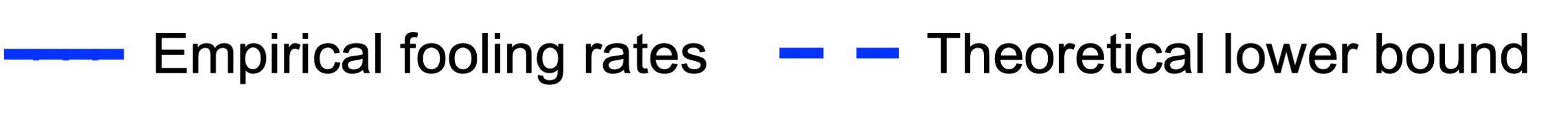}}
    \end{subfigure}~\\
    \vspace{-.1cm}
    \centering
    \begin{subfigure}{}
    \includegraphics[width=.27\linewidth]{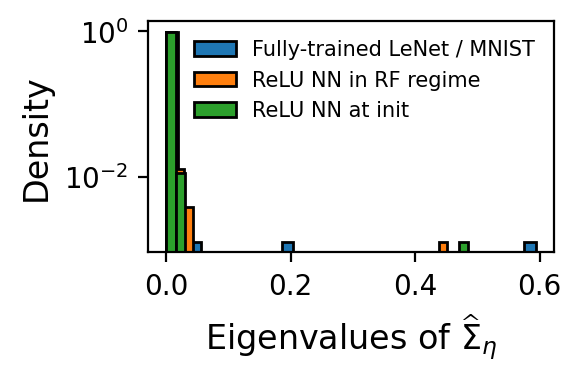}\hspace{-.1cm}
    \includegraphics[width=.72\linewidth]{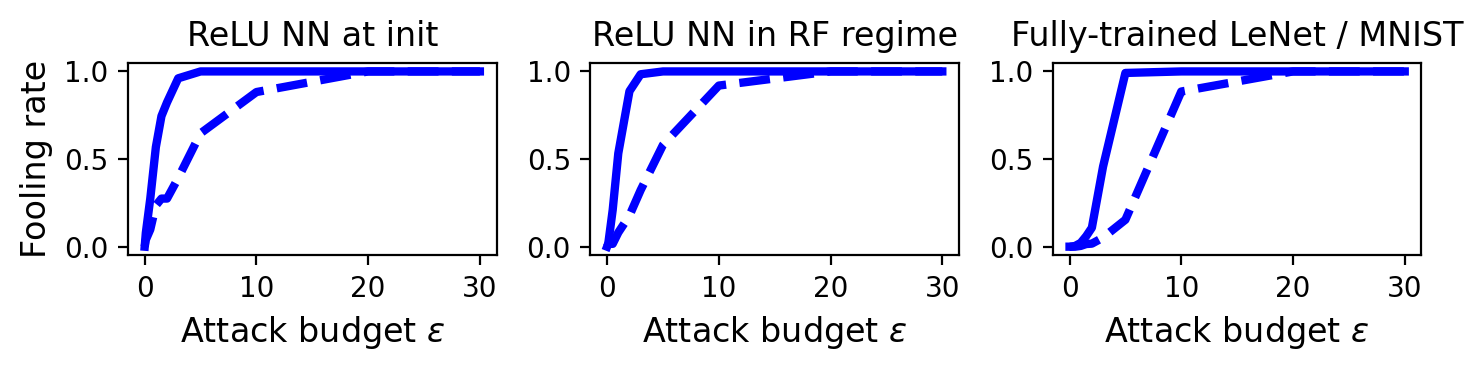}
    \end{subfigure}
    \vspace{-.5cm}
\caption{(Singular-subspace attack). Same experimental setting in Fig. \ref{fig:relu}.  \textbf{Leftmost plot:} Showing a histogram of the eigenvalues of empirical covariance matrix $\widehat{\Sigma}_\eta$ of gradient directions (computed on 1000 examples). Notice how the largest eigenvalue for each model is much larger than the other eigenvalues.
\textbf{Second to fourth (rightmost) plot:}
Notice how the fooling rate rises rapidly.}
    \label{fig:mnist-lenet}
\end{figure}

\paragraph{Compact decision-regions.}
We run a small toy experiment where the positive decision-region $C':=\mathbb R^d \setminus C$ is the unit-ball $B_d$ in $\mathbb R^d$. The attack subspace is the span of any unit-vector in $\mathbb R^d$ chosen at random.  We used a batch of $n=10^4$ points sampled uniformly at random on $C'= B_d$. Solid curves correspond to actual fooling rates computed on a batch of $n=10^4$ points sampled uniformly at random from $B_d$. The results are shown in Fig. \ref{fig:ballcase}.

\begin{figure}[!htp]
    \centering
    \begin{subfigure}{}
    \includegraphics[height=0.044\textwidth]{{legend_low_dim2.png}}
    \end{subfigure}
    \vspace{-.1cm}
    \centering
    \begin{subfigure}{}
    \includegraphics[width=1\linewidth]{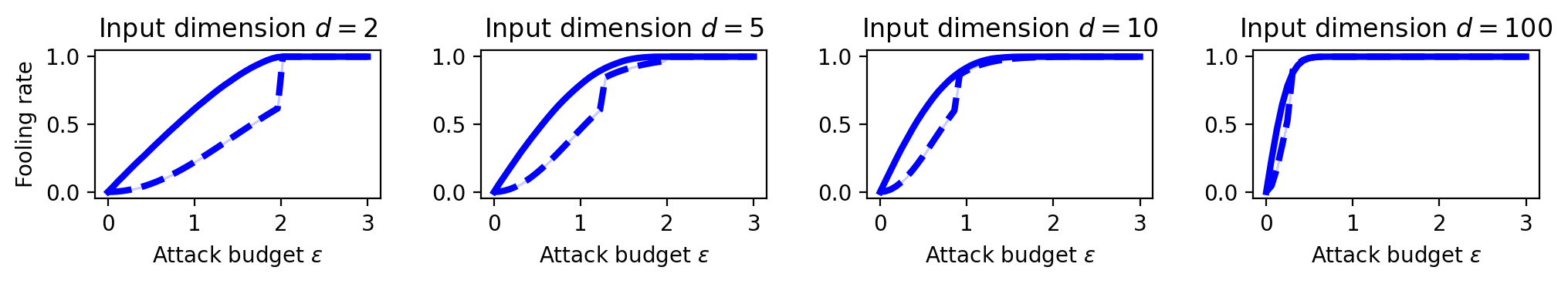}
    \end{subfigure}
    \vspace{-.5cm}
    \caption{Universal adversarial attacks for compact decision-region. Solid curves correspond to actual fooling rates while broken lines correspond to the lower-bound predicted by Theorem \ref{thm:ballcase}.
    }
    \label{fig:ballcase}
\end{figure}

\section{Results for Lipschitz smooth decision-boundaries}
\label{sec:smooth}
Consider a binary classifier on $\mathbb R^d$ for which the negative decision-region $C$ of the classifier \eqref{eq:fC},
where $f:\mathbb R^d \to \mathbb R$ is a differentiable function.

\subsection{Warmup: All you need are strong gradients}
Thanks to a classical result from optimization theory (see Proposition 3.2 of \cite{nonlinearhoffman}), if the condition
\begin{restatable}[Uniformly strong gradients]{cond}{}
\label{cond:strongslope}
$\|\nabla f(x)\| \ge \beta$ for all $x \in C'$.
\end{restatable}
is satisfied,
then any $x \in C'$ is at a distance $d_C(x)$ at most $f(x)/\beta$ from $C$.
Note that Condition \ref{cond:strongslope} is a special case of Condition \ref{cond:beta} corresponding to $\gamma=1$. Intuitively, under Condition \ref{cond:strongslope}, the gradient of $f$ at any point $x \in C'$ is strong enough: gradient-flow started at $x$ then escapes the region $C'$ after travelling a distance $O(f(x))$.
This is formalized in the following result which will be extended to the case of subspace attacks in the rest of this section. 
\begin{restatable}[A lower-bound for full-dimensional attacks]{thm}{}
\label{thm:strongslope}
Under Condition \ref{cond:strongslope}, it holds that
\begin{eqnarray}
\fr(\mathbb R^d;\varepsilon) := \mathbb P_X(X \in C^\varepsilon \mid X \in C') \ge \mathbb P_X(f(X) \le \beta\varepsilon \mid X \in C'),\text{ for any }\varepsilon \ge 0.
\end{eqnarray}
\end{restatable}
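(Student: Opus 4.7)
The plan is to deduce the bound directly from the cited distance estimate $d_C(x) \le f(x)/\beta$ for $x \in C'$, which under Condition \ref{cond:strongslope} is essentially Proposition 3.2 of \cite{nonlinearhoffman}. Once that pointwise estimate is in hand, the probabilistic statement follows from a one-line set inclusion. So the real content is the distance bound, which I would verify via a gradient-flow argument rather than quoting the reference as a black box.

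First I would set up the descent curve. For $x \in C'$, consider the normalized gradient flow $\dot\gamma(t) = -\eta(\gamma(t))$ with $\gamma(0) = x$, where $\eta(y) := \nabla f(y)/\|\nabla f(y)\|$. As long as $\gamma(t) \in C'$, Condition \ref{cond:strongslope} guarantees that $\eta$ is well-defined and the chain rule gives
\begin{equation}
\frac{d}{dt} f(\gamma(t)) \;=\; \langle \nabla f(\gamma(t)), \dot\gamma(t)\rangle \;=\; -\|\nabla f(\gamma(t))\| \;\le\; -\beta.
\end{equation}
Integrating this from $0$ to $t$ yields $f(\gamma(t)) \le f(x) - \beta t$. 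Hence at time $t^\star := f(x)/\beta$ the value $f(\gamma(t^\star))$ has become nonpositive, which means the flow has crossed into $C$ no later than time $t^\star$. Since $\|\dot\gamma\|\equiv 1$, the arclength travelled is at most $t^\star$, so $d_C(x) \le f(x)/\beta$. (If the flow exits $C'$ strictly before $t^\star$ the bound is only stronger, and standard Lipschitz/absolute-continuity arguments handle the case where $f$ is only differentiable a.e.)

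Now the probabilistic step. Fix $\varepsilon \ge 0$ and let $x \in C'$ with $f(x) \le \beta\varepsilon$. The pointwise bound gives $d_C(x) \le f(x)/\beta \le \varepsilon$, so by definition of the $\varepsilon$-expansion, $x \in C^\varepsilon$. This proves the set inclusion
\begin{equation}
\{X \in C' \;:\; f(X) \le \beta\varepsilon\} \;\subseteq\; \{X \in C'\} \cap \{X \in C^\varepsilon\}.
\end{equation}
Taking $\mathbb P_X$ and dividing by $\mathbb P_X(X \in C')$ delivers $\mathbb P_X(f(X) \le \beta\varepsilon \mid X \in C') \le \mathbb P_X(X \in C^\varepsilon \mid X \in C') = \fr(\mathbb R^d;\varepsilon)$, which is exactly the desired inequality.

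The main (and really the only) obstacle is making the gradient-flow argument fully rigorous when $f$ is merely differentiable almost everywhere, e.g.\ for ReLU-like networks where $\nabla f$ may be discontinuous on a measure-zero set. The clean way is either to invoke \cite{nonlinearhoffman} directly, or to approximate $f$ by mollifications $f_\rho$ with uniformly lower-bounded gradient norm and pass to the limit, using that $x \mapsto d_C(x)$ is $1$-Lipschitz and that the set $\{f \le \beta\varepsilon\} \cap C'$ is stable under such approximation. Everything else is bookkeeping.
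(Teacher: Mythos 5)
Your proof takes essentially the same route as the paper: reduce the probabilistic claim to the pointwise distance estimate $d_C(x) \le f(x)/\beta$ and then apply the one-line set inclusion. The only difference is that the paper quotes Proposition 3.2 of the cited optimization reference as a black box for that estimate, whereas you re-derive it via the normalized gradient-flow argument; both the flow derivation and the concluding probabilistic step are correct, and the regularity caveats you flag (a.e.\ differentiability, existence of the flow) are exactly the issues that citing the reference, or a mollification argument, takes care of.
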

 As an illustration, if we consider $f$ to be a randomly initialized\footnote{With layer widths within $\mathrm{poly}(\log d)$ factors of one another, and weights initialized in the standard way.} finite-depth ReLU neural-network, one can show (see~\cite{daniely2020most,bubecksinglestep2021, bartlett2021}) that for any $x \in \mathbb R^d$, we have $f(x) = \mathcal O(\|x\|/\sqrt{d})$ and $\inf_x \|\nabla f(x)\| = \Omega(1)$ w.h.p. over the weights. The above theorem immediately predicts the existence of adversarial examples of size $\sqrt{d}$ times smaller than the typical $L_2$-norm of data point.


\subsection{Main result under Lipschitzness}
We will extend Theorem \ref{thm:strongslope} to the case of subspace attacks, under the following smoothness condition
\begin{restatable}[Lipschitz gradients]{cond}{}
There exists $L \in [0,\infty)$
such that
\begin{align}
\|\nabla f(x') - \nabla f(x)\| \le L\|x'-x\|\,\forall x,x' \in C'.
\label{eq:lipgrad}
\end{align}
\label{cond:smooth}
\end{restatable}
\vspace{-.5cm}
This condition stipulates that the gradient of $f$ varies smoothly on the positive decision-region $C' = \mathbb{R}^d \setminus C$ of the classifier \eqref{eq:fC}. Note that when $f$ is twice-differentiable on $C'$, Condition \ref{cond:smooth} holds with
$ L = \sup_{x \in C'}\|\nabla^2 f(x)\|_{op},
$
where $\nabla^2 f(x) \in \mathbb R^{d \times d}$ is the Hessian of $f$ at $x$. For example, a feed-forward neural net with bounded weights and twice-differentiable activation function with bounded Hessian (e.g. sigmoid, quadratic, tanh, GELU, cos, sin, etc.) will satisfy Condition \ref{cond:smooth}.




The following is one of our main results. It generalizes both Proposition \ref{prop:halfspace} and Theorem \ref{thm:strongslope}.
\begin{restatable}[Subspace attacks for smooth decision-boundaries]{thm}{smoothboundary}
\label{thm:smoothboundary}
Suppose Condition \ref{cond:smooth} is in order and let $V$ be a possibly random adversarially $(\alpha,\delta)$-viable subspace of $\mathbb R^d$.
Then,
\begin{itemize}
\item[(A)]
For any $\varepsilon \ge 0$, the average fooling rate of $V$ is lower-bounded as follows
\begin{eqnarray}
\label{eq:smoothlowerbound}
\mathbb E_V [\fr(V;\varepsilon)] \ge \mathbb P_X\left(m_f(X) \le \min\left(\frac{\alpha\varepsilon}{2},\frac{\alpha^2 \| \nabla f(X)\|}{2L}\right) \,\Big | \, X \in C'\right)-\delta.
\end{eqnarray}

\item[(B)] If in addition Condition \ref{cond:beta} is in order, then for any $0 \le \varepsilon \le \alpha\beta/L$ it holds that
\begin{eqnarray}
\label{eq:smoothlowerabsolutebound}
\mathbb E_V [\fr(V;\varepsilon)] \ge \mathbb P_X\left(m_f(X) \le \alpha\varepsilon/2 \mid X \in C'\right)-\delta-\gamma.
\end{eqnarray}
\end{itemize}
\end{restatable}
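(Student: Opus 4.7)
The plan is to reduce the statement to a one-dimensional descent along the steepest-descent direction within $V$, namely $v(t) := -t\,\Pi_V \nabla f(x)/\|\Pi_V \nabla f(x)\| \in V$ for $t \in [0,\varepsilon]$. Fix a point $x \in C'$ and a realization of $V$ on the viability event $\mathcal{E}(x,V) := \{\|\Pi_V \eta(x)\| \ge \alpha\}$, which by Definition \ref{df:viableV} has joint probability at least $1-\delta$ under the law of $(X,V)$ conditioned on $X \in C'$. Setting $g := \Pi_V \nabla f(x)$, we have $\|g\| \ge \alpha\|\nabla f(x)\|$ and $\langle \nabla f(x), g\rangle = \|g\|^2$.

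Next, I would invoke the standard descent (quadratic upper) lemma implied by Condition \ref{cond:smooth}: provided the segment from $x$ to $x + v(t)$ lies inside $C'$,
\[
f(x + v(t)) \le f(x) + \langle \nabla f(x), v(t)\rangle + \tfrac{L}{2}\|v(t)\|^2 = f(x) - t\|g\| + \tfrac{L}{2}t^2.
\]
If instead the segment first crosses $\{f = 0\}$ at some $t' \in (0,t]$, then $x + v(t') \in C$ already, certifying $x \in C_V^\varepsilon$ at the smaller cost $t' \le \varepsilon$. Minimizing the quadratic upper bound over $t \in [0,\varepsilon]$ at $t^\star := \min(\varepsilon,\|g\|/L)$ yields an achievable reduction of at least $\min\bigl(\|g\|^2/(2L),\,\varepsilon\|g\|/2\bigr)$ in $f$. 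Hence $x \in C_V^\varepsilon$ whenever $f(x) \le \min\bigl(\|g\|^2/(2L),\,\varepsilon\|g\|/2\bigr)$. Dividing by $\|\nabla f(x)\|$ and using $\|g\| \ge \alpha\|\nabla f(x)\|$ converts this into the $V$-free sufficient condition $m_f(x) \le \min\bigl(\alpha^2\|\nabla f(x)\|/(2L),\,\alpha\varepsilon/2\bigr)$, exactly as in \eqref{eq:smoothlowerbound}. A single union bound for the failure of $\mathcal{E}(X,V)$, combined with the tower property (since the resulting margin condition no longer depends on $V$), proves (A).

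For (B), assume additionally Condition \ref{cond:beta} and restrict to $\varepsilon \le \alpha\beta/L$. On the high-probability event $\{\|\nabla f(X)\| \ge \beta\}$ of probability at least $1-\gamma$, one has $\alpha\varepsilon/2 \le \alpha^2\beta/(2L) \le \alpha^2\|\nabla f(X)\|/(2L)$, so the minimum in the sufficient condition collapses to $\alpha\varepsilon/2$; a second union bound subtracting this $\gamma$-mass yields \eqref{eq:smoothlowerabsolutebound}. The main technical subtlety I anticipate is the path-in-$C'$ caveat for the descent lemma, since Condition \ref{cond:smooth} only posits Lipschitzness of $\nabla f$ on $C'$; the observation that exiting $C'$ mid-segment already certifies $x \in C_V^\varepsilon$ disposes of it cleanly, and the remainder is a careful bookkeeping of which regime of $\min(\varepsilon,\|g\|/L)$ is active.
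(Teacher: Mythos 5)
Your proof is correct and takes essentially the same route as the paper: move along the projected gradient $\Pi_V\nabla f(x)$ within $V$, apply the Lipschitz quadratic bound, reduce to the margin condition $m_f(x)\le\min(\alpha\varepsilon/2,\alpha^2\|\nabla f(x)\|/(2L))$, and union-bound over the viability and strong-gradient events. The only cosmetic differences are that the paper optimizes over step sizes implicitly via a Moreau-envelope/Huber identity (Lemma \ref{lm:huber}) to bound the distance function $d_V(x)$, whereas you plug in the explicit step $t^\star=\min(\varepsilon,\|g\|/L)$ and also make explicit the segment-leaving-$C'$ caveat that the paper's inequality \eqref{eq:gradlipineq} states loosely over all of $\mathbb R^d$ even though Condition \ref{cond:smooth} only holds on $C'$.
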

\begin{restatable}{rmk}{}
Note that the condition "$\varepsilon \le \alpha\beta/L$" in part (B) of the theorem cannot be removed in general, as is seen in the case where $C=B_d$, and considering any subspace $V$ with $\dim V < d$.
\end{restatable}

In Appendix \ref{subsec:warmup} and \ref{subsec:lipschitzexamples}, bounds established in \cite{guo2020threats} are recovered from Theorem \ref{thm:smoothboundary} as special cases.

\subsection{A matching upper-bound under convexity}
We now establish a corresponding upper bound for the case where $C$ is convex (e.g., half-spaces, balls, ellipsoids, etc.). See Appendix \ref{subsec:upperboundproof} for proof.
\begin{restatable}{thm}{upperbound}
\label{thm:upperbound}
Suppose $f$ is convex differentiable, and let $V$ be a subspace of $\mathbb R^d$ satisfying
\begin{eqnarray}
\|\Pi_V \eta(x)\| \le \widetilde\alpha,\text{ for some }\widetilde\alpha \in [0,1]\text{ and for all }x \in C'.
\end{eqnarray}
Then, for any $\varepsilon \ge 0$, we have
$\fr(V;\varepsilon) \le \mathbb P_X(m_f(X) \le \widetilde{\alpha}\varepsilon \mid X \in C')$. In particular, for any subspace $V$ of $\mathbb R^d$, it always holds that $\fr(V;\varepsilon) \le \mathbb P_X(m_f(X) \le \varepsilon \mid X \in C')$.

\end{restatable}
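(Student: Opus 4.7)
The plan is to use the first-order characterization of convexity to \emph{invert} the logic underlying Theorem \ref{thm:smoothboundary}: whereas Lipschitz smoothness upper-bounds $f(x+v)$ and thus lets the attacker drive $f$ across zero with small $\|v\|$, convexity lower-bounds $f(x+v)$, and therefore obstructs such attacks whenever the pointwise margin $m_f(x)$ is not already small. Consequently, the proof should proceed by a deterministic pointwise argument on $C' \cap C_V^\varepsilon$, with a single integration at the end.

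Concretely, I would fix an arbitrary $x \in C' \cap C_V^\varepsilon$ and pull out a witnessing perturbation $v \in V$ with $\|v\| \le \varepsilon$ and $f(x+v) \le 0$. Applying the standard subgradient inequality $f(x+v) \ge f(x) + \nabla f(x)^\top v$, and rewriting $\nabla f(x)^\top v = (\Pi_V \nabla f(x))^\top v$ using $v \in V$, combining $f(x+v) \le 0$ with Cauchy--Schwarz and the hypothesis $\|\Pi_V\eta(x)\| \le \widetilde\alpha$ yields
\begin{equation*}
f(x) \;\le\; -(\Pi_V \nabla f(x))^\top v \;\le\; \|\Pi_V \nabla f(x)\|\,\|v\| \;\le\; \widetilde\alpha\,\|\nabla f(x)\|\,\varepsilon.
\end{equation*}
Dividing by $\|\nabla f(x)\|$ yields the pointwise bound $m_f(x) \le \widetilde\alpha\varepsilon$.

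The division is legitimate because, for $x \in C'$ (so $f(x) > 0$), convexity forces $\nabla f(x) \ne 0$: otherwise $x$ would be a global minimizer of $f$ with $f \equiv f(x) > 0$, giving $C = \emptyset$ and $\fr(V;\varepsilon) = 0$, in which case the bound is trivial. Integrating the pointwise inclusion $C' \cap C_V^\varepsilon \subseteq \{x \in C' : m_f(x) \le \widetilde\alpha\varepsilon\}$ against $\mathbb P_X(\cdot \mid X \in C')$ delivers the first claim. The ``in particular'' clause follows immediately by taking $\widetilde\alpha = 1$, which is always admissible since $\Pi_V$ is an orthogonal projection and hence a contraction, giving $\|\Pi_V \eta(x)\| \le \|\eta(x)\| = 1$ whenever $\nabla f(x) \ne 0$. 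I do not see any serious technical obstacle here; the main ``hard'' conceptual step is simply recognizing that convexity precisely reverses the smoothness inequality used in the lower bound, and the only subtlety to watch is the vanishing-gradient degenerate case just treated.
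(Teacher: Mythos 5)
Your proof is correct, and while its overall strategy (convexity gives a linear lower bound on $f(x+v)$, which caps the margin of any point reachable from $C$ within $V$) matches the paper's, your execution is actually more careful than the paper's proof. The paper's argument writes ``$x - d_V(x)v \in C$ where $v = \Pi_V\nabla f(x)/\|\Pi_V\nabla f(x)\|$,'' i.e., it assumes the nearest point of $C$ within $x+V$ is reached along the projected negative-gradient ray. That is false for general convex $C$: e.g., for the eccentric ellipse $f(x) = x_1^2 + 100x_2^2 - 1$ and $x = (2,\,0.01)$, the ray $x - t\nabla f(x)$ misses $C$ entirely, so $x - d_V(x)v \notin C$ even with $V = \mathbb R^d$. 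Your version sidesteps this by taking an \emph{arbitrary} witnessing $v \in V$ with $\|v\| \le \varepsilon$ and $f(x+v) \le 0$, applying the subgradient inequality directly to that $v$, and then Cauchy--Schwarz together with $v \in V$ to get $f(x) \le -(\Pi_V\nabla f(x))^\top v \le \widetilde\alpha\,\|\nabla f(x)\|\,\varepsilon$. This yields the same pointwise inclusion $C' \cap C_V^\varepsilon \subseteq \{x \in C' : m_f(x) \le \widetilde\alpha\varepsilon\}$ as the paper intends, but without the unjustified step; you also correctly dispose of the degenerate $\nabla f(x) = 0$ case (where $C = \emptyset$ and the bound is vacuous), which the paper leaves implicit. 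One minor slip: from $\nabla f(x) = 0$ and convexity you get $f \ge f(x) > 0$ everywhere, not $f \equiv f(x)$, but the conclusion $C = \emptyset$ is unaffected.
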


\section{Results for locally almost-affine decision-regions}
\label{sec:local}
We now consider the following smoothness condition for the classifier \eqref{eq:fC}.
\begin{restatable}[Bounded oscillation of gradients]{cond}{}
The exists $0 < R \le \infty$ and $0 \le \theta \ll 1$ such that
\begin{eqnarray}
\|\nabla f(x')-\nabla f(x)\| \le \theta\text{ for all }x,x' \in C'\text{ with }\|x'-x\| \le R.
\end{eqnarray}
\label{cond:almostconstgrad}
\end{restatable}
\vspace{-.65cm}
Examples of functions that satisfy this condition include: half-spaces and wide feedforward ReLU neural networks with randomly initialized intermediate weights, where $\theta = o(1)$ w.h.p. over the intermediate weights, as will be seen in Section \ref{subsec:relu}.
The following is one of our main contributions.
\begin{restatable}{thm}{almostconstantgrads}
\label{thm:almostconstantgrads}
Suppose Conditions \ref{cond:beta} and \ref{cond:almostconstgrad} with parameters $\beta \in (0,\infty)$, $R \in (0,\infty]$ and $0 \le \theta \ll 1$. Let $V$ be a possibly random adversarially $(\alpha,\delta)$-viable subspace of $\mathbb R^d$ with $\alpha > \theta/\beta$.
Then, for any $0 \le \varepsilon \le R$, the average fooling rate of $V$ is lower-bounded as follows (with $\overline \alpha := 1-\theta/(\alpha\beta)$)
\begin{eqnarray}
\label{eq:notfullV}
\mathbb E_V[\fr(V;\varepsilon)]  \ge \mathbb P_X(m_f(X) \le \overline\alpha \varepsilon \mid X \in C',\,\|\nabla f(X)\| \ge \beta)-\delta-\gamma.
\end{eqnarray}
\label{thm:almostconstgrad}
\end{restatable}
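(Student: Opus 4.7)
The approach combines a projected-gradient-type deterministic argument inside $V$ with a union bound over the three favourable events "large gradient," "large projection onto $V$," and "small margin." Fix a realisation of $V$ and a point $x \in C'$ satisfying $\|\nabla f(x)\| \ge \beta$ and $\|\Pi_V\eta(x)\| \ge \alpha$, and set $u := \Pi_V\eta(x)/\|\Pi_V\eta(x)\|$, a unit vector in $V$. I would analyse the straight ray $\gamma(t) := x - t u$, $t \in [0,\varepsilon]$, via a first-exit-time argument: define $\tau := \inf\{t \in [0,\varepsilon] : f(\gamma(t)) \le 0\}$, with $\tau = +\infty$ if the set is empty. If $\tau \le \varepsilon$, then $\gamma(\tau) \in C$ and $\|\gamma(\tau)-x\| \le \varepsilon$, so $x \in C_V^\varepsilon$ and we are done.

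In the complementary case $\tau = +\infty$, the entire ray $\gamma([0,\varepsilon])$ lies in $C'$, so every pair $(x,\gamma(s))$ is eligible for the oscillation bound in Condition \ref{cond:almostconstgrad}; since $\|\gamma(s)-x\| = s \le \varepsilon \le R$, this gives $\|\nabla f(\gamma(s)) - \nabla f(x)\| \le \theta$. Observing that $\langle \nabla f(x), u\rangle = \|\nabla f(x)\|\,\|\Pi_V\eta(x)\|$ (because $u \in V$ and $\Pi_V$ is self-adjoint), integration along $\gamma$ yields
\begin{align*}
f(\gamma(\varepsilon))
&\le f(x) - \varepsilon\bigl(\|\nabla f(x)\|\,\|\Pi_V\eta(x)\| - \theta\bigr)\\
&\le \|\nabla f(x)\|\bigl(m_f(X) - \alpha\overline\alpha\,\varepsilon\bigr),
\end{align*}
after substituting $\|\Pi_V\eta(x)\| \ge \alpha$ and $\|\nabla f(x)\| \ge \beta$ into the correction term ($\alpha - \theta/\beta = \alpha\overline\alpha$). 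Hence, on the event $m_f(x) \le \alpha\overline\alpha\,\varepsilon$ (which is implied by, or up to absorbing a factor of $\|\Pi_V\eta(x)\|$ equivalent to, the stated bound $m_f(X) \le \overline\alpha\varepsilon$), the right-hand side is non-positive, contradicting $\tau = +\infty$. Thus $x \in C_V^\varepsilon$ in either branch.

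For the probabilistic assembly, let $E_1 := \{m_f(X) \le \alpha\overline\alpha\,\varepsilon\}$, $E_2 := \{\|\nabla f(X)\| \ge \beta\}$, $E_3 := \{\|\Pi_V\eta(X)\| \ge \alpha\}$. The deterministic step shows $\{X \in C'\} \cap E_1 \cap E_2 \cap E_3 \subseteq \{X \in C_V^\varepsilon\}$. The elementary inequality $\mathbb P(A \cap B \cap C \mid D) \ge \mathbb P(A \mid B,D) - \mathbb P(B^c \mid D) - \mathbb P(C^c \mid D)$, combined with $\mathbb P(E_2^c \mid X \in C') \le \gamma$ from Condition \ref{cond:beta} and $\mathbb E_V\mathbb P(E_3^c \mid X \in C') \le \delta$ from adversarial $(\alpha,\delta)$-viability, then delivers \eqref{eq:notfullV} after taking $\mathbb E_V$. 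The only genuine subtlety in the argument is the first-exit-time dichotomy: Condition \ref{cond:almostconstgrad} is stated only on $C' \times C'$, so the oscillation estimate cannot be applied until the ray $\gamma$ has been certified to remain inside $C'$, which is precisely what the definition of $\tau$ provides by continuity of $f$. Everything else is a straightforward first-order calculation.
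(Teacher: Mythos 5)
Your proof is correct, and the route you take is a genuinely different (and in one respect more careful) version of the paper's argument. The paper bounds the constrained distance $d_V(x)$ by solving a one-dimensional variational problem along the ray $x+\mathbb R v(x)$ via the soft-thresholding identity of Lemma \ref{lm:softthres}, and then applies Lemma \ref{lm:proba} for the probabilistic step. You instead bound $f(\gamma(\varepsilon))$ directly by integrating the gradient along the ray after subtracting $\nabla f(x)$, and wrap this in a first-exit-time dichotomy: either the ray hits $C$ before time $\varepsilon$ (done), or it stays in $C'$ and the oscillation bound of Condition \ref{cond:almostconstgrad} may be safely applied to every intermediate point. This dichotomy addresses explicitly a subtlety that the paper's writeup glosses over: the oscillation condition is hypothesized only for pairs in $C'\times C'$, so one must justify its use along the ray. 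Your probabilistic assembly via the elementary bound $\mathbb P(A\cap B\cap C\mid D)\ge \mathbb P(A\mid B,D)-\mathbb P(B^c\mid D)-\mathbb P(C^c\mid D)$ then recovers the conditioning structure in the statement (on both $X\in C'$ and $\|\nabla f(X)\|\ge\beta$) and plays the role of Lemma \ref{lm:proba} in the paper.

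You flag, correctly and cautiously, that your deterministic step delivers $m_f(X)\le \alpha\overline\alpha\,\varepsilon$ rather than $m_f(X)\le\overline\alpha\,\varepsilon$; these differ by the factor $\alpha\le 1$, so your version is the weaker conclusion. In fact the paper's own proof yields the same factor $\alpha$: the displayed intermediate bound $\min(d_V(x),R)\le f(x)/(p_V(x)-\theta)_+ = m_f(x)/(\alpha_V(x)-\theta/\|\nabla f(x)\|)_+$ gives, on the events $\alpha_V(x)\ge\alpha$ and $\|\nabla f(x)\|\ge\beta$, the threshold $\alpha-\theta/\beta=\alpha\overline\alpha$, not $\overline\alpha$ (and note the displayed definition of $\overline\alpha_V(x)$ in the proof is itself written reciprocally to what the preceding equality requires). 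So the discrepancy is a typo in the theorem statement (equivalently, $\overline\alpha$ should be $\alpha-\theta/\beta$), not a defect in your argument; indeed Theorem \ref{thm:smoothboundary} carries the analogous factor $\alpha$ in $\alpha\varepsilon/2$, consistent with your derivation.
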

\vspace{-.5cm}
\begin{restatable}[Tightness]{rmk}{}
Theorem \ref{thm:almostconstantgrads} is tight, as can be seen by considering the case where $C$ is a half-space for which $f(x) = x^\top w - b$, for some unit-vector $w \in \sphere$, and $b \in \mathbb R$; take $V = \mathbb Rw$. \textbf{N.B.:} $\nabla f(x) \equiv w$, and so Conditions \ref{cond:beta} and \ref{cond:almostconstgrad} hold with $\alpha=\beta=1$, $\theta=\gamma=0$, and $R=\infty$.
\end{restatable}

\subsection{Application to feed-forward ReLU neural networks}
\label{subsec:relu}
Consider a feed-forward neural net with ReLU activation and $M \ge 2$ layers with parameters matrices $W_1 \in \mathbb R^{d_0 \times d_1},W_2 \in \mathbb R^{d_1 \times d_2},\ldots,W_M=a \in \mathbb R^{d_{M-1} \times d_M}$,
where $d_0 = d$ and $d_M := 1$. Each $d_\ell$ is the width of the $\ell$ layer, and the matrices $W_1,\ldots,W_{M-1}$ are the intermediate weights matrices, while $W_M=a$ is the output weights vector. For aninput $x \in \mathbb R^d$, the output of the neural net is
\begin{eqnarray}
\begin{split}
f_{\mathrm{relu}}(x) = z_M &:= a^\top z_{M-1}\in \mathbb R,\text{ with }
z_0 := x,\, z_\ell := \mathrm{relu}(W_\ell^\top z_{\ell-1}) \in \mathbb R^\ell,\,\forall \ell \in [M-1],
\end{split}
\label{eq:relunn}
\end{eqnarray}
and the ReLU activation is applied entry-wise.
The matrices $W_1,\ldots,W_M$
are randomly initialized: 
\begin{eqnarray}
\begin{split}
[W_\ell]_{i,j} &\overset{iid}{\sim} N(0,1/d_{\ell-1}),\text{ for }\ell \in [M],i \in [d_\ell],\, j \in [d_{\ell-1}].
\end{split}
\label{eq:randomweights}
\end{eqnarray}
The output weights vector $a \in \mathbb R^{d_{M-1}}$ can be arbitrary, for example:
(1) random (as in \cite{daniely2020most, bartlett2021}), or
(2) optimized to fit training data, as in the so-called random features (RF) regime \citep{rf,rf2}, with $L_2$-regularization on $a$.

Let $d_{\min} := \min_{0 \le \ell \le M-1}d_\ell$ and $d_{\max} := \max_{0 \le \ell \le M-1} d_\ell$ be respectively, the minimum and maximum width of the layers. As in \cite{bartlett2021}, assume the following condition.
\begin{restatable}[Genuinely wide, finite-width]{cond}{}
\label{cond:bottleneck}
The neural network architecture verifies:
(i) Bounded depth, i.e., $M=\mathcal O(1)$ layers. (ii)
Genuinely wide, i.e., $d_{\min} \gtrsim (\log d_{\max})^{40M}$ and $d_{\min} \to \infty$.
\end{restatable}
We have the following corollary to Theorem \ref{thm:almostconstantgrads}.
\begin{restatable}[Feed-forward ReLU neural networks with random intermediate weights]{cor}{relu}
Consider the decision-region $C = \{x \in \mathbb R^d \mid f_{\mathrm{relu}}(x) \le 0\}$ where $f_{\mathrm{relu}}$ is the $M$-layer feed-forward ReLU neural network defined in \eqref{eq:relunn} with random intermediate weights $W_1,\ldots,W_{M-1}$ sampled according to \eqref{eq:randomweights}. 
Suppose Conditions \ref{cond:bottleneck} is in order. Let $V$ be a possibly random $(\alpha,\delta)$-viable subspace of $\mathbb R^d$, with $\alpha=\Omega(1)$. Then,
for $0 \le \varepsilon \lesssim (\log d_{\max})^{40M}$, it holds w.h.p. over $W_1,\ldots,W_{M-1}$ that 
\begin{eqnarray}
\mathbb E_V [\fr(V;\varepsilon)] \gtrsim (1-\delta)\mathbb P_X(m_{f_{\mathrm{relu}}}(X) \le \varepsilon \mid X \in C').
\end{eqnarray}


In particular, at initialization, for $\varepsilon \gtrsim \mathbb E\|X\|/\sqrt{d}$ we have,
$\mathbb E_V[\fr(V;\varepsilon)] \gtrsim 1-\delta$.
\label{cor:relu}
\end{restatable}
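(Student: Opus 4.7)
The plan is to deduce this corollary directly from Theorem \ref{thm:almostconstantgrads} by verifying Conditions \ref{cond:beta} and \ref{cond:almostconstgrad} with quantitative parameters extracted from the literature on wide random ReLU networks. Thus the entire work reduces to a careful bookkeeping of known concentration results, and no new analytic estimates are needed.

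First, I would invoke the results of \cite{daniely2020most} and \cite{bartlett2021} which establish, under Condition \ref{cond:bottleneck}, that with high probability over the random intermediate weights $W_1,\ldots,W_{M-1}$ sampled as in \eqref{eq:randomweights}, the gradient map $x \mapsto \nabla f_{\mathrm{relu}}(x)$ enjoys the following two properties uniformly on a large region of input space:
\begin{equation*}
(\textbf{i})\;\; \|\nabla f_{\mathrm{relu}}(x)\| = \Theta(1), \qquad (\textbf{ii})\;\; \|\nabla f_{\mathrm{relu}}(x')-\nabla f_{\mathrm{relu}}(x)\| \le \theta = o(1) \text{ whenever } \|x'-x\|\le R,
\end{equation*}
with $R \asymp (\log d_{\max})^{40M}$. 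Property (\textbf{i}) gives Condition \ref{cond:beta} with $\beta = \Omega(1)$ and $\gamma = o(1)$, while (\textbf{ii}) gives Condition \ref{cond:almostconstgrad} with $\theta = o(1)$ at the same scale $R$. Since the subspace $V$ is $(\alpha,\delta)$-viable with $\alpha = \Omega(1)$, the ratio $\theta/(\alpha\beta) = o(1)$, hence the constant $\overline\alpha = 1-\theta/(\alpha\beta)$ appearing in Theorem \ref{thm:almostconstantgrads} is $1-o(1)$ and in particular bounded below by an absolute constant for $d$ large enough. Plugging these parameters into Theorem \ref{thm:almostconstantgrads}, for all $0 \le \varepsilon \le R \asymp (\log d_{\max})^{40M}$,
\begin{equation*}
\mathbb{E}_V[\fr(V;\varepsilon)] \;\ge\; \mathbb{P}_X\bigl(m_{f_{\mathrm{relu}}}(X) \le \overline\alpha\,\varepsilon \,\bigm|\, X \in C',\,\|\nabla f_{\mathrm{relu}}(X)\| \ge \beta\bigr) - \delta - \gamma.
\end{equation*}
Since the conditioning event on $\|\nabla f_{\mathrm{relu}}(X)\|$ holds with probability $1-\gamma = 1-o(1)$ by (\textbf{i}) and $\overline\alpha = 1-o(1)$, rescaling $\varepsilon$ by an absolute constant and absorbing $\gamma$ into the $(1-\delta)$ prefactor yields the first claim $\mathbb{E}_V[\fr(V;\varepsilon)]\gtrsim (1-\delta)\,\mathbb{P}_X(m_{f_{\mathrm{relu}}}(X) \le \varepsilon \mid X\in C')$.

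For the "in particular" statement at initialization, I would combine the preceding bound with the quantitative output-norm estimate from \cite{daniely2020most,bubecksinglestep2021,bartlett2021}, namely that at initialization $|f_{\mathrm{relu}}(x)| \lesssim \|x\|/\sqrt{d}$ w.h.p. uniformly on the data region. Together with (\textbf{i}), this yields $m_{f_{\mathrm{relu}}}(x) = |f_{\mathrm{relu}}(x)|/\|\nabla f_{\mathrm{relu}}(x)\| \lesssim \|x\|/\sqrt{d}$. Consequently, for any $\varepsilon \gtrsim \mathbb{E}\|X\|/\sqrt{d}$, Markov's inequality on $\|X\|$ shows that $\mathbb{P}_X(m_{f_{\mathrm{relu}}}(X) \le \varepsilon \mid X \in C') = 1-o(1)$, and the first part of the corollary then gives $\mathbb{E}_V[\fr(V;\varepsilon)] \gtrsim 1-\delta$, as claimed. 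The main (and only) obstacle is ensuring that the regime of validity for the cited gradient oscillation and output norm bounds indeed covers the range $\varepsilon \lesssim (\log d_{\max})^{40M}$ appearing in the statement; this is already the scaling at which \cite{bartlett2021} proves these estimates, so the match is immediate.
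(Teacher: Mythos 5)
Your proposal takes essentially the same route as the paper: both reduce the corollary to Theorem \ref{thm:almostconstantgrads} by verifying Conditions \ref{cond:beta} and \ref{cond:almostconstgrad} for $f_{\mathrm{relu}}$ via the concentration estimates of \cite{bartlett2021} (specifically Lemmas 2.2 and 2.8 there, stated in the paper as Lemma \ref{lm:bartlett}), observe that $\theta/(\alpha\beta)=o(1)$ so that $\overline\alpha = 1-o(1)$, and then for the ``in particular'' part combine the bound $|f_{\mathrm{relu}}(x)|\lesssim\|x\|/\sqrt{d}$ with the gradient lower bound. The only cosmetic difference is that the paper keeps the $\|a\|$ normalization ($\theta\lesssim\|a\|/(\log d_{\max})^M$, $\beta\gtrsim\|a\|$, with $\gamma=0$ since $\beta$ is a uniform infimum over $C'$), whereas you normalize $\|\nabla f\|=\Theta(1)$ with $\gamma=o(1)$; both give the same $\theta/\beta=o(1)$ and hence the same conclusion.
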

The second part of the result implies that the subspace $V$ contains adversarial perturbations of size $\sqrt{d}$ times smaller than the norm of a typical datapoint. Thus, it is a generalizes \cite{daniely2020most,bartlett2021} to subspaces.

\section{Universal adversarial perturbations for compact decision-regions}
\label{sec:compactum}
We now consider the case where (i) the positive decision-region $C' := \mathbb R^d \setminus C$ of the classifier \eqref{eq:fC} is an arbitrary \emph{compact} subset of $\mathbb R^d$, and (ii) the distribution of feature vector $X$ conditioned on  $X \in C'$ is the uniform distribution on $C'$.
We show that a single adversarial
direction $v$ is sufficient to switch a nonzero fraction ofinputs from the positive decision-region $C'$ to the negative one $C$.
\begin{wrapfigure}[9]{R}{.45\textwidth}
    \centering
    \vspace{-.3cm}
    \caption{Lens-shaped region $A_d^{\mathrm{lens}}(\varepsilon/2)$.
    In high-dimensions,
    its relative volume of tends to zero (curse of dimensionality)
    }
    \includegraphics[width=.43\linewidth]{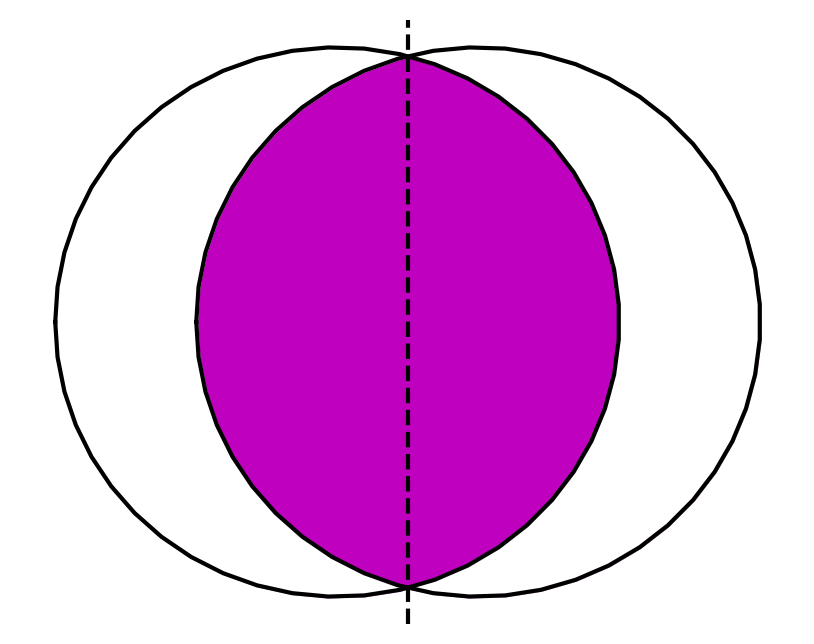}
    \label{fig:lens}
\end{wrapfigure}
\subsection{Warm-up: The case of a ball}
Suppose the positive decision-region  $C' := \mathbb R^d \setminus C$ of the classifier \ref{eq:fC} is the unit-ball $B_d$ and the distribution of the features $X \in \mathbb R^d$ conditioned on $X \in C'$ is uniform on $C'$. Thus, the negative decision-region is $C = \{x \in \mathbb R^d \mid f(x) \le 0\}$, where $f(x) := (1-\|x\|^2)/2$. 
Consider the unit-vector $v=(1,0,\ldots,0) \in \mathbb R^d$, and let
\begin{eqnarray}
A_d^{\rm cap}(r) := \{x \in \mathbb R^d \mid v^\top x \ge
r\}
\end{eqnarray}
be the spherical cap of height
$r \ge 0$, and $A_d^{\rm lens}(r)$ be the corresponding spherical lens, of volume
twice that of $A_d^{\rm cap}(r)$.
By geometry of the situation (refer to Fig. \ref{fig:lens}), the fooling rate
of $v$ is given by
\begin{eqnarray}
\begin{split}
   \fr(v;\varepsilon) \ge \mathbb P_X(X + \varepsilon v \in C \mid X \in C')
    &= 1-\frac{\vol(B_d \cap (\varepsilon v + B_d))}{\vol(B_d)}
    = 1 -
    \frac{\vol(A^{\rm lens}_d(\varepsilon/2))}{\omega_d},
\end{split}
\label{eq:lens}
\end{eqnarray}
where 
$\omega_d=\pi^{d/2}/\Gamma(d/2+1)$ is the volume of the unit-ball $B_d$. Note that $A_d^{\rm lens}(\varepsilon/2)$, has diameter $2\sqrt{1-\varepsilon^2/4}$, and therefore
is contained in a ball of radius $\sqrt{1-\varepsilon^2/4}$.
Thus, we have
\begin{eqnarray}
\vol(A_d^{\rm len}(\varepsilon/2)) \le (1-\varepsilon^2/4)^{d/2}\omega_d \le
e^{-d\varepsilon^2/8}\omega_d.
\end{eqnarray}
Further, if $\sqrt{8/d} \le \varepsilon < 2$, with a bit more work one get the improved upper-bound
\begin{eqnarray}
\vol(A_d^{\rm len}(\varepsilon/2)) \le
(\varepsilon \sqrt{d}) e^{-\varepsilon^2(d-1)/8}\omega_d.
\end{eqnarray}

For example, see ~\cite[page 221]{boucheron2013}.
Combining with \eqref{eq:lens}, we deduce the following result lower-bounding the fooling rate of one-dimensional subspaces.
\begin{restatable}{thm}{}
\label{thm:ballcase}
Suppose the positive decision-region $C' := \mathbb R^d \setminus C$ is the unit-ball $B_d$. Then, given an attack budget $\varepsilon \ge 0$, the fooling rate $\fr(v;\varepsilon)$ of any unit-vector $v \in \sphere$ is lower-bounded like so
\begin{eqnarray}
\fr(v;\varepsilon) \ge \mathbb P(X + \varepsilon v \in C \mid X \in C') \ge g_d(\varepsilon),
\end{eqnarray}
where $g_d:\mathbb R_+ \to [0,1]$ is the function defined by
\begin{eqnarray}
\label{eq:expobound}
\begin{split}
 g_d(\varepsilon)\begin{cases}=1,&\mbox{ if
    }\varepsilon \ge 2,\\
    \ge 1-(\varepsilon\sqrt{d})^{-1}e^{-\varepsilon^2(d-1)/8}, &\mbox{ if }\sqrt{8/d} \le \varepsilon <
    2,\\
    \ge 1-e^{-\varepsilon^2d/8}, &\mbox{ if }0 \le \varepsilon < \sqrt{8/d}.\end{cases}
    \end{split}
\end{eqnarray}
\end{restatable}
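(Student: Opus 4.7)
The plan is to exploit the rotational invariance of both $C' = B_d$ and the uniform measure on it. Since the distribution of $X$ is invariant under any orthogonal map sending $e_1$ to $v$, it suffices to prove the bound for the single fixed direction $v = e_1$. Then, since $\fr(v;\varepsilon)$ is a supremum over all perturbations in $\mathbb R v$ of length at most $\varepsilon$, specializing to the single candidate perturbation $\varepsilon v$ yields
\[
\fr(v;\varepsilon) \ge \mathbb P_X(X + \varepsilon v \in C \mid X \in C') = 1 - \frac{\vol(B_d \cap (\varepsilon v + B_d))}{\vol(B_d)}.
\]
This reduces the claim to an upper bound on the normalized volume of the intersection of two unit balls whose centers are at distance $\varepsilon$ apart.

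The next step is geometric: describe this intersection as the lens-shaped region $A_d^{\mathrm{lens}}(\varepsilon/2)$ depicted in Fig.~\ref{fig:lens}. By the reflection symmetry $x \mapsto \varepsilon v - x$, this lens splits into two congruent halves, each equal to the spherical cap $\{x \in B_d : x_1 \ge \varepsilon/2\} = A_d^{\mathrm{cap}}(\varepsilon/2)$ (verified by observing that $\|x-\varepsilon v\|^2 \le 1$ together with $\|x\| \le 1$ forces $x_1 \ge \varepsilon/2$). Thus $\vol(A_d^{\mathrm{lens}}(\varepsilon/2)) = 2\vol(A_d^{\mathrm{cap}}(\varepsilon/2))$, and the problem reduces to bounding the volume of a spherical cap.

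To obtain the crude bound $e^{-d\varepsilon^2/8}$ valid for small $\varepsilon$, I would show that every point of the lens lies within distance $\sqrt{1-\varepsilon^2/4}$ of the midpoint $c = (\varepsilon/2, 0, \ldots, 0)$: for $x_1 \ge \varepsilon/2$ this follows from $\|x-c\|^2 = (x_1-\varepsilon/2)^2 + \|x_\perp\|^2 \le (x_1-\varepsilon/2)^2 + 1 - x_1^2 = 1 + \varepsilon^2/4 - \varepsilon x_1 \le 1 - \varepsilon^2/4$, and the mirror argument handles $x_1 < \varepsilon/2$. Containment in such a ball gives $\vol(A_d^{\mathrm{lens}}(\varepsilon/2)) \le (1-\varepsilon^2/4)^{d/2}\omega_d \le e^{-d\varepsilon^2/8}\omega_d$ by $1-t\le e^{-t}$. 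For the sharper bound on $[\sqrt{8/d}, 2)$, I would invoke the classical high-dimensional spherical cap estimate (\cite[p.~221]{boucheron2013}), namely $\vol(A_d^{\mathrm{cap}}(h))/\omega_d \lesssim (h\sqrt{d})^{-1}e^{-h^2(d-1)/2}$ for $h \ge \sqrt{2/d}$, applied with $h=\varepsilon/2$.

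The main obstacle is essentially constant-tracking in the last step: I must verify that the cap estimate can be written with a constant cleanly absorbing the factor of $2$ arising from $\vol(\text{lens}) = 2\vol(\text{cap})$, so that the displayed bound has the precise prefactor $(\varepsilon\sqrt{d})^{-1}$ and exponent $\varepsilon^2(d-1)/8$, and that the threshold $\sqrt{8/d}$ is exactly where the two regime-specific estimates cross. All the geometric content is elementary once rotational symmetry is used; the only real work is the standard but delicate cap-volume estimate, which I would quote rather than reprove.
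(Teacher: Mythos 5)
Your argument mirrors the paper's proof exactly: reduce by rotational symmetry to $v=e_1$, express the complementary fooling rate as the normalized volume of the lens $B_d\cap(\varepsilon v+B_d)$, enclose the lens in a ball of radius $\sqrt{1-\varepsilon^2/4}$ to get the $e^{-d\varepsilon^2/8}$ regime, and quote the spherical-cap estimate from \cite[p.~221]{boucheron2013} for $\varepsilon\ge\sqrt{8/d}$. One small fix and one useful observation: in your parenthetical for ``half-lens equals cap'' the implication should run the other way ($\|x\|\le 1$ and $x_1\ge\varepsilon/2$ force $\|x-\varepsilon v\|\le 1$, not the converse), and your concern about constant-tracking in the cap bound is well founded --- the paper's own in-text display for $\vol(A_d^{\mathrm{lens}}(\varepsilon/2))$ carries the prefactor $\varepsilon\sqrt d$ rather than its reciprocal, which is inconsistent with the theorem statement and appears to be a typo there.
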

Of course, it is reminiscent of the \emph{curse of dimensionality} that, for every fixed attack budget $\varepsilon>0$, the lower-bound $g_d(\varepsilon)$ increases rapidly to $1$ as a function of theinput dimension $d$.


\subsection{The case of general compact bodies}
We extend Theorem \ref{thm:ballcase} to the case where the positive decision-region $C':=\mathbb R^d\setminus C$ is a (nonempty) compact subset of $\mathbb R^d$ and let $R(C')$ be the radius of the ball with the same volume as $C'$. Using an argument based on the \emph{Riesz-Sobolev rearrangement inequality} \citep{riez},
we reduce to the previously discussed ball case and establish the following result, which is one of our main contributions.
\begin{restatable}[Universal adversarial perturbation for compact decision-region]
  {thm}{steiner}
Suppose the positive decision-region $C':=\mathbb R^d \setminus C$ is a compact subset of $\mathbb R^d$ equipped with the uniform measure. Then, for any $\varepsilon \ge 0$, there exists a direction $v \in \sphere$ with fooling rate lower-bounded as \begin{eqnarray}
\fr(v;\varepsilon) \ge g_d(\varepsilon/(2R(C'))),
\end{eqnarray}
where the function $g_d$ is given in \eqref{eq:expobound}.
\label{thm:steiner}
\end{restatable}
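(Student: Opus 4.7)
The plan is to reduce to the unit-ball case (Theorem \ref{thm:ballcase}) via the Riesz-Sobolev rearrangement inequality. Let $R := R(C')$ and let $(C')^* := R \cdot B_d$ denote the centered ball with $|(C')^*| = |C'|$. First, I would establish a one-sided reduction: for any $y \in \mathbb R^d$ with $0 < |y| \le \varepsilon$, setting $v := y/|y| \in \sphere$, the perturbation $y$ is admissible in $V = \mathbb R v$, giving
\begin{equation*}
\fr(v;\varepsilon) \;\ge\; \mathbb P_X(X + y \in C \mid X \in C') \;=\; 1 - \frac{|C' \cap (C' - y)|}{|C'|}.
\end{equation*}
So it suffices to find some $y^* \in \varepsilon B_d$ for which $|C' \cap (C' - y^*)|$ is sufficiently small.

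Next I would apply the Riesz-Sobolev inequality to the nonnegative functions $f = h = \mathbf{1}_{C'}$ and $g = \mathbf{1}_{\varepsilon B_d}$. Since $g$ is already radially symmetric-decreasing, $g^* = g$, while $f^* = h^* = \mathbf{1}_{(C')^*}$ by the choice of $R$. After changing variables $z = x-y$ on each side, the inequality becomes
\begin{equation*}
\int_{\varepsilon B_d} |C' \cap (C' - z)| \, dz \;\le\; \int_{\varepsilon B_d} |(C')^* \cap ((C')^* - z)| \, dz.
\end{equation*}
Using the scaled ball case $|(C')^* \cap ((C')^* - z)|/|(C')^*| \le 1 - g_d(|z|/R)$, dividing by $|\varepsilon B_d| \cdot |C'|$ and using $|C'| = |(C')^*|$, this yields $\mathbb E_{Z \sim U(\varepsilon B_d)} |C' \cap (C'-Z)|/|C'| \le 1 - \mathbb E_Z\, g_d(|Z|/R)$.

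By a first-moment argument I would then extract $y^* \in \varepsilon B_d$ with $|C' \cap (C'-y^*)|/|C'| \le 1 - \mathbb E_Z\, g_d(|Z|/R)$; setting $v^* := y^*/|y^*|$ and invoking the reduction from the first paragraph gives $\fr(v^*;\varepsilon) \ge \mathbb E_Z\, g_d(|Z|/R)$. To close the loop, observe that $|Z|$ for $Z \sim U(\varepsilon B_d)$ has density $d\rho^{d-1}/\varepsilon^d$ on $[0,\varepsilon]$, so $\mathbb P(|Z| \ge \varepsilon/2) = 1 - 2^{-d}$; monotonicity of $g_d$ then delivers $\mathbb E_Z\, g_d(|Z|/R) \ge (1 - 2^{-d})\, g_d(\varepsilon/(2R))$, and the exponentially small correction is absorbed into the Gaussian tail inherent in the explicit form of $g_d$.

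The main obstacle I anticipate is the tension between the \emph{average} inequality produced by Riesz-Sobolev and the \emph{pointwise} bound needed for a single direction $v^*$. Passing from average to existence is soft, but since the extracted $y^*$ only satisfies $|y^*| \le \varepsilon$ (not necessarily $|y^*| = \varepsilon$), the argument loses a factor of $2$ in the effective radius -- this is precisely the source of the $\varepsilon/(2R)$ (rather than $\varepsilon/R$) appearing in the bound. A finer analysis exploiting that $|Z|$ for $Z \sim U(\varepsilon B_d)$ concentrates near $\varepsilon$ when $d$ is large could sharpen the constant, but the stated bound is cleaner and already captures the key curse-of-dimensionality phenomenon.
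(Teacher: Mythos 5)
Your proposal is correct and follows essentially the same route as the paper: average the overlap $|C' \cap (C'-z)|$ over $z$ uniform in $\varepsilon B_d$, invoke the Riesz--Sobolev rearrangement inequality with $g_1=g_2=\mathbf 1_{C'}$ and $g_3 = \mathbf 1_{\varepsilon B_d}$ to reduce to the iso-volumetric ball, plug in the ball bound of Theorem \ref{thm:ballcase}, and extract a single good direction by a first-moment argument, with the factor $2$ in $\varepsilon/(2R(C'))$ arising precisely because the extracted perturbation need only satisfy $\|y^*\|\le\varepsilon$. The paper packages the rearrangement step as Lemma \ref{lm:fedja} on the functional $\tau$ and is somewhat terser about the final extraction, but the argument is the same; your explicit treatment of the radial concentration (and the resulting $(1-2^{-d})$ factor) is a more careful rendering of a step the paper glosses over.
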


\section{Concluding remarks}
We conducted a rigorous analysis of the phenomenon of low-dimensional adversarial perturbations and derived tight lower-bounds for the fooling rate along arbitrary adversarial subspaces based on the geometry of the target decision-region, and the alignment between the subspace and the gradients of the model, i.e., the adversarial viability of the subspace (Definition \ref{df:viableV}). Our work provides rigorous foundations for explaining intriguing empirical observations from the literature on the subject ~\citep{unipert,singularart,yin2019fourier,guo2018low}. For the case of compact decision-regions we have shown the existence of UAPs. We believe our work will further generate fruitful research in this area.





\clearpage
\typeout{} 
\bibliographystyle{icml2022}
\bibliography{bibliography}


\appendix

\begin{center}
    
    \Large \textbf{Appendix}
    
\end{center}


\section{Adversarially viable subspaces}
\label{sec:infoV}
Below, we list a few important examples of adversarial viable subspaces (refer to Definition \ref{df:viableV}).

\subsection{The linear span of the gradient field} The subspace $V_{all}$, spanned by the set of gradients $\{\nabla f(x) \mid x \in C'\}$ is adversarially $(1,0)$-viable, since it induces no distortion at all: it preserves the entire norm of the gradient of $f$ at any point of the positive decision-region $x \in C'$. The same is true in the trivial case when $V=\mathbb R^d$, the entireinput space $\mathbb R^d$, irrespective of $f$. For example, in the case of a linear classifier with $f(x) := x^\top w -b$, $V_{all}$ is simply the one-dimensional subspace spanned by $w$. For a less trivial example, it is known since \cite{montufar2014,Hanin2019,serra2018} that a ReLU neural net $f$ with a total of $N$ neurons in the intermediate layers, partitions theinput space $\mathbb R^d$ into $P=\mathcal O(2^N)$ pieces and $f$ is an affine function on each of the pieces.
Thus, $V_{all}$ is an adversarially $(1,0)$-viable $P$-dimensional subspace. As a side note, it is thus desirable to design neural networks which have a large number of pieces. This requires over-parametrization, and is consistent with recent findings \citep{lor,bubeck2021universal}.

\subsection{Proof of Lemma \ref{lm:randomviable}}
\randomviable*
\begin{proof}
Fix $x \in \mathbb R^d$ and consider the unit-vector $u = \eta(x):=\nabla f(x)/\|\nabla f(x)\|$.
 From the \emph{Johnson-Lindenstrauss Lemma}, we know that for a uniformly-random $k$-dimensional subspace $V$ of $\mathbb R^d$, it holds w.p $1-e^{-dt^2/2}$ that $\|\Pi_V u\| \ge \sqrt{k/d}-t$. We deduce that 
 \begin{eqnarray*}
 \begin{split}
\mathbb E_V \mathbb P_X(\|\Pi_V \eta(X)\| \ge \sqrt{k/d}-t \mid X \in C') &= \mathbb E_X[\mathbb P_V(\|\Pi_V \eta(X)\| \ge \sqrt{k/d}-t) \mid X \in C']\\
&\ge 1 - e^{-dt^2/2},
\end{split}
 \end{eqnarray*}
 which proves the result.
\end{proof}

\subsection{Proof of Theorem \ref{thm:svd} (eigen-subapces)}
\svd*

\begin{proof}
 Let $\Sigma_\eta = USU^\top$ be the SVD of $\Sigma_\eta$, where $S$ is a diagonal matrix containing the nonzero
eigenvalues $\lambda_1 \ge \lambda_2 \ge \ldots \ge \lambda_r  > 0$ of $\Sigma_\eta$, $r \in [d]$ is the rank of $\Sigma_\eta$, and $U$ is a $d \times r$ matrix with
orthonormal columns. Then, the orthogonal projector for the subspace $V := V_{\mathrm{eigen},k}$ is given explicitly by $\Pi_V = U_{\le k}U_{\le k}^\top$, where $U_{\le k}$ is the $d \times \min(k,r)$ orthogonal matrix corresponding
to the first $\min(k,r)$ columns of $U$. Consider the r.v $Z := \|\Pi_V \eta(X)\|$. By a standard formula for the expectation of a quadratic form, one computes
\begin{eqnarray}
\begin{split}
\mathbb E\, [Z^2 \mid X \in C'] &= \mathbb E[\eta(X)^\top\Pi_V\eta(X) \mid X \in C'] = \trace(\Pi_V\Sigma_\eta)= \trace(U_{\le k}U_{\le k}^\top\Sigma_\eta)\\
&=\trace(U_{\le k}^\top \Sigma_\eta U_{\le k}) = \sum_{i=1}^{\min(k,r)}\lambda_i =: s_k.
\end{split}
\end{eqnarray}
On the other hand, conditioned on the event $X \in C'$ we have $0 \le Z \le \|\eta(X)\|$. Thus, for any $\alpha \in (0,\sqrt{s_k})$, we have
\begin{eqnarray}
X \in C' \implies \mathbbm{1}(Z \ge \alpha) \ge (Z^2-\alpha^2)/(1-\alpha^2),\text{ with equality on the event }Z^2 \in \{\alpha^2,1\}.
\label{eq:pinelis}
\end{eqnarray}
The claim then follows upon taking expectations on both sides of the above inequality conditioned on the event $X \in C'$.
\end{proof}





\section{Proof of Theorem \ref{thm:smoothboundary} : Lower-bound assuming Lipschitz decision-boundary}
\label{subsec:smoothboundaryproof}
\smoothboundary*


\subsection{Warm-up: Half-space}
\label{subsec:warmup}
We start with the simple case of a linear binary classifier on $\bR^d$, for which the negative decision-region (and therefore the positive decision-region too) is a half-space
$C=H_{w,b}$, given by
\begin{eqnarray}
\label{eq:halfspace}
H_{w,b} := \{x \in \mathbb R^d \mid  x^\top w -b \le 0\},
\end{eqnarray}
on with unit-normal vector $w \in \mathbb R^d$ and bias parameter $b \in \mathbb R$. This corresponds to taking $f(x):=x^\top w-b$ in \eqref{eq:fC}.
The following result
was established in Lemma 2.2 of \cite{guo2020threats}.
\begin{restatable}[\cite{guo2020threats}]{prop}{halfspace}
Consider the scenario where
$C$
is the half-space $H_{w,b}$ defined in \eqref{eq:halfspace}. For any subspace $V$ of $\mathbb R^d$ and  $\varepsilon \ge 0$, it holds that
\begin{eqnarray}
\label{eq:convset}
\fr(V;\varepsilon) \ge \bP_X(X^\top w + b \le \|\Pi_V w\|\varepsilon \mid X \in C').
\end{eqnarray}
In particular, if $V$ is a uniformly random $k$-dimensional subspace of $\mathbb R^d$, then for any $t \in (0,\sqrt{k/d})$ it holds w.p $1-2e^{-t^2d/2}$ over $V$ that
$\fr(V;\varepsilon) \ge \bP_X(X^\top w + b \le (\sqrt{k/d}-t)\varepsilon \mid X \in C')$.
\label{prop:halfspace}
\end{restatable}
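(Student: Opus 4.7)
The statement has two parts: (i) a deterministic lower bound on $\fr(V;\varepsilon)$ for an arbitrary subspace $V$, and (ii) a specialization for uniformly random $k$-dimensional subspaces. I would prove them in this order, leveraging the linearity of $f(x) = x^\top w - b$ to reduce everything to an elementary geometric calculation plus one concentration bound.

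For part (i), the key step is to characterize $C_V^\varepsilon$ explicitly. By definition, a point $x$ lies in $C_V^\varepsilon$ iff there exists $v \in V$ with $\|v\| \le \varepsilon$ such that $(x+v)^\top w - b \le 0$, i.e., $v^\top w \le b - x^\top w$. For any $v \in V$ we have $v^\top w = v^\top \Pi_V w$, so Cauchy--Schwarz yields
\[
\min_{v \in V,\,\|v\| \le \varepsilon} v^\top w \;=\; -\varepsilon\,\|\Pi_V w\|,
\]
attained at $v^\star := -\varepsilon\,\Pi_V w / \|\Pi_V w\|$ (with the degenerate case $\Pi_V w = 0$ handled by $v=0$, which makes the subsequent bound trivial). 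Combining these two observations,
\[
x \in C_V^\varepsilon \;\Longleftrightarrow\; x^\top w - b \le \varepsilon\,\|\Pi_V w\|.
\]
Taking $\mathbb{P}_X(\cdot \mid X \in C')$ on both sides gives the deterministic bound \eqref{eq:convset} (the sign of $b$ in the statement is dictated by the definition of $H_{w,b}$ in \eqref{eq:halfspace}).

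For part (ii), I would fix the unit-normal $w$ and control $\|\Pi_V w\|$ from below via a standard Johnson--Lindenstrauss style concentration bound: for a uniformly random $k$-dimensional subspace $V \subseteq \mathbb{R}^d$ and a fixed unit vector $u$, one has $\|\Pi_V u\| \ge \sqrt{k/d} - t$ with probability at least $1 - 2 e^{-t^2 d/2}$ for any $t \in (0, \sqrt{k/d})$. This is exactly the ingredient already invoked in the proof of Lemma \ref{lm:randomviable}. On the corresponding high-probability event, the deterministic bound from part (i) combined with monotonicity of the map $s \mapsto \mathbb{P}_X(X^\top w - b \le s \mid X \in C')$ in $s$ yields $\fr(V;\varepsilon) \ge \mathbb{P}_X(X^\top w - b \le (\sqrt{k/d} - t)\varepsilon \mid X \in C')$.

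There is no real obstacle here: part (i) is a one-line geometric minimization, and part (ii) is an off-the-shelf concentration inequality applied to the deterministic bound of part (i). The only mild subtlety is the degenerate case $\Pi_V w = 0$, which causes no trouble since both sides of the claimed inequality then reduce to $\mathbb{P}(X^\top w - b \le 0 \mid X \in C') = 0$.
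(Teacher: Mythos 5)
Your proof is correct and takes essentially the same route as the paper: reduce to the Cauchy--Schwarz minimization $\min_{v\in V,\,\|v\|\le\varepsilon} v^\top w = -\varepsilon\|\Pi_V w\|$ for part (i), then invoke the Johnson--Lindenstrauss concentration bound on $\|\Pi_V w\|$ for part (ii). The only cosmetic difference is that you phrase part (i) as an exact characterization of $C_V^\varepsilon$ (an iff), whereas the paper lower-bounds $\fr(V;\varepsilon)$ by restricting to a single optimal direction $v^\star\in V\cap\sphere$; for a half-space these are the same observation.
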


Observe that, since the margin for the linear classifier is $m_f(x) := \max(f(x),0)/\|\nabla f(x)\| =  (x^\top w + b)_+$, the lower-bound \eqref{eq:convset} can be written as $\mathbb \fr(V;\varepsilon) \ge \mathbb P(m_f(x) \le \alpha\varepsilon \mid X \in C')$. We will emulate this template lower-bound in the next subsection for non-linear classifiers.

\begin{proof}[Proof of Proposition \ref{prop:halfspace}]
We provide a simplified self-contained proof for convenience. Indeed, one computes
\begin{eqnarray*}
\begin{split}
\fr(V;\varepsilon) 
:= \bP_X(X \in C^\varepsilon_V \mid X \in C') &\ge \sup_{v \in V}\bP_X(X \in C_v^\varepsilon \mid X \in C')\\
&= \sup_{v \in V \cap \sphere}\mathbb P_X(X^\top w + \varepsilon v^\top w + b \le 0 \mid X \in C')\\
&= \mathbb P_X(X^\top w + b \le \varepsilon\|\Pi_V w\| \mid X \in C'),
\end{split}
\end{eqnarray*}
which proves the first part of the claim. The second part follows from the first part combined with the \emph{Johnson-Lindenstrauss (JL) Lemma}, whereby $\|\Pi_V w\| \ge \sqrt{k/d}-t$ w.p. $1-2e^{-t^2d/2}$.
\end{proof}

\subsection{Proof idea for Theorem \ref{thm:smoothboundary}}
It is folklore in optimization theory that a function $f$ which satisfies Condition \ref{cond:smooth} admits the following first-order approximation
\begin{eqnarray}
-\frac{L}{2}\|x'-x\|^2 \le f(x') - f(x)-\nabla f(x)^\top (x'-x) \le \frac{L}{2}\|x'-x\|^2,\text{ for all }x,x' \in \mathbb R^d.
\label{eq:gradlipineq}
\end{eqnarray}

Starting at a point $x \in C'$, move a distance $\varepsilon$, in the direction $\Pi_V \nabla f(x)$ to arrive at a point $x' = x-\varepsilon \Pi_V \nabla f(x) \in \mathbb R^d$, the above inequality gives the quadratic approximation 
\begin{eqnarray}
f(x') \le f(x)-\varepsilon \|\Pi_V \nabla f(x)\|^2 + \frac{L}{2}\varepsilon^2\|\Pi_V \nabla f(x)\|^2.
\end{eqnarray}

The RHS can be made $\le 0$ by guaranteeing that
\begin{itemize}
\item[(1)] $\|\Pi_V \nabla f(x)\| \ge \alpha\|\nabla f(x)\|$.
\item[(2)] $m_f(x) \le \min(\alpha \varepsilon/2,\alpha^2\|\nabla f(x)\|/(2L))$.
\end{itemize}

\subsection{Additional notations} 
We will need some additional notations. Let
$d(x) \in [0,\infty)$ be the distance of $x$ from $C$ and let $d_V(x) \in [0,\infty]$ be the distance of $x$ from $C$ along the subspace $V$, i.e.,
\begin{eqnarray}
\begin{split}
d(x) &:= \inf_{v \in \mathbb R^d}\|v\|\text{ subject to }x+v \in C,\\
d_V(x) &:= \inf_{v \in V} \|v\|\text{ subject to }x + v \in C,
\end{split}
\label{eq:distances}
\end{eqnarray}
with the convention that $\inf \emptyset = \infty$. By definition of the $(\varepsilon,V)$-expansion $C_V^\varepsilon$ of $C$, we have
\begin{eqnarray}
\label{eq:torch}
C^\varepsilon_V = \{x \in \mathbb R^d \mid d_V(x) \le \varepsilon\}.
\end{eqnarray}
Also, it is clear that $d_V(x) \ge d(x)$, attained when $V = \mathbb R^d$. As will see in the proof of Theorem \ref{thm:smoothboundary} below, turns out that if the gradients $\nabla f(x)$ for $x \in C'$ are well-aligned (but not necessarily perfectly) with the subspace $V$, then there is an upper-bound of the form $d_V(x) \lesssim m_f(x)$, where $m_f(x)$ is the \emph{margin} of $f$ at $x$ defined in \eqref{eq:margin}.


\subsection{Auxiliary lemmas}
\begin{restatable}{lm}{huber}
For any $\rho,r >0$ and $b \in \mathbb R^d$, we have the identity
\begin{align}
\sup_{z \in \rho B_n}b^\top z - \frac{1}{2r}\|z\|^2 &= \begin{cases}
r\|b\|^2/2,&\mbox{ if }\|b\| \le \rho / r,\\
\rho\|b\|-\rho^2/(2r),&\mbox{ otherwise.}
\end{cases}
\end{align}
\label{lm:huber}
\end{restatable}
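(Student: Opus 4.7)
The plan is to separate the optimization into a radial and a directional part. Write any feasible $z$ as $z = s u$ where $s = \|z\| \in [0,\rho]$ and $u$ is a unit vector (if $z = 0$ take $u$ arbitrary). Then $b^\top z - \tfrac{1}{2r}\|z\|^2 = s\, b^\top u - \tfrac{s^2}{2r}$. For a fixed radius $s$, by Cauchy--Schwarz the supremum over unit $u$ is $s\|b\|$, attained at $u = b/\|b\|$ (the case $b = 0$ being trivial, since then both sides of the claim equal $0$). Consequently the original supremum equals
\begin{equation*}
\sup_{s \in [0,\rho]} h(s), \qquad h(s) := s\|b\| - \frac{s^2}{2r}.
\end{equation*}

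The function $h$ is a downward parabola with unconstrained maximizer $s^\star = r\|b\|$ obtained from $h'(s) = \|b\| - s/r = 0$, and value $h(s^\star) = r\|b\|^2/2$.

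The remaining step is a case split on whether $s^\star$ lies in the feasible interval $[0,\rho]$. If $\|b\| \le \rho/r$, then $s^\star \le \rho$, so the constraint is inactive and the supremum equals $h(s^\star) = r\|b\|^2/2$, matching the first branch of the claim. Otherwise $\|b\| > \rho/r$, so $h$ is strictly increasing on $[0,\rho]$ (since $h'(s) = \|b\| - s/r > 0$ for $s < s^\star$ and $\rho < s^\star$), and the supremum is attained at the boundary $s = \rho$ with value $h(\rho) = \rho\|b\| - \rho^2/(2r)$, which is the second branch. This exhausts both cases and completes the proof.

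There is no real obstacle here: the lemma is an elementary constrained-quadratic computation, and the only subtlety is handling the degenerate case $b = 0$, which collapses to $0 = 0$ in the first branch. The identity is just the familiar Moreau--Huber envelope formula of the function $\iota_{\rho B_d}$ (the indicator of the ball), written out explicitly.
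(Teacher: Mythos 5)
Your proof is correct, and it takes a genuinely different and more elementary route than the paper's. You use a polar decomposition $z = su$, reduce the directional optimization to Cauchy--Schwarz (yielding $\sup_{\|u\|=1} b^\top u = \|b\|$), and then maximize the resulting one-variable concave quadratic $h(s) = s\|b\| - s^2/(2r)$ over $[0,\rho]$ by a first-order case split on whether the unconstrained optimizer $s^\star = r\|b\|$ is feasible. The paper instead proceeds via convex duality: it writes $\tfrac{1}{2r}\|z\|^2$ as a Fenchel--Legendre self-conjugate, swaps the sup and inf by Sion's minimax theorem, changes variables to land on a Moreau envelope of the Euclidean norm, and then recognizes the Huber function. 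Both are sound; yours is shorter and requires no background beyond calculus and Cauchy--Schwarz, while the paper's explains why the result is "the Huber formula'' (connecting to a proximal-operator identity), a perspective it reuses in the companion Lemma~\ref{lm:softthres} where an analogous duality computation produces the soft-thresholding operator. Your direct argument would also transfer easily to Lemma~\ref{lm:softthres}, with $h(s) = s\|b\| - s/r$ replacing the quadratic.
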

\begin{proof}
Since the quadratic function $z \mapsto (1/2)\|z\|^2$ is unchanged upon taking the \emph{Fenchel-Legendre transform}, we have
\begin{eqnarray*}
\begin{split}
\sup_{z \in \rho B_d}b^\top z - \frac{1}{2r}\|z\|^2 &= \sup_{\|z\| \le \rho}b^\top z - \frac{1}{r} \left(\sup_{u \in \mathbb R^d}z^\top u - \frac{1}{2}\|u\|^2\right)\\
&\stackrel{(*)}{=} \inf_{u \in \mathbb R^d}\left(\frac{1}{2r}\|u\|^2+\sup_{\|z\| \le \rho} z^\top (b-u/r)\right)\\
&= \inf_{u \in \mathbb R^d}\left(\frac{1}{2r}\|u\|^2 + \rho \|b-u/r\|\right)\\
&= \inf_{v \in \mathbb R^d}\left(\frac{r}{2}\|v-b\|^2+\rho\|v\|\right),\text{ by change of variable }v := b-u/r\\
&= \rho\inf_{v \in \mathbb R^d}\left(\frac{1}{2\rho/r}\|v-b\|^2+\|v\|\right),\text{ by factoring out }\rho\\
&\stackrel{(**)}{=} \rho \begin{cases}
\|b\|^2 / (2\rho/r),&\mbox{ if }\|b\| \le \rho/r,\\
\|b\|-\rho/(2r),&\mbox{ else}
\end{cases}\\
&=\begin{cases}
r\|b\|^2/2,&\mbox{ if }\|b\| \le \rho/r,\\
\rho\|b\|-\rho^2/(2r),&\mbox{ else,}
\end{cases}
\end{split}
\end{eqnarray*}
where $(*)$ uses \emph{Sion's Minimax Theorem}, and in $(**)$ we have recognized a rescaled \emph{Moreau envelope} of the Euclidean norm, which is the Huber function evaluated at $\|b\|$.
\end{proof}

Finally, we will need the following lemma.

\begin{restatable}{lm}{}
Suppose $R_1,R_2,R_3$ are random variables and $\phi:\mathbb R \to \mathbb [-\infty,\infty]$ is a possibly random nondecreasing function. If $\mathbb P(R_2 \ge R_3) \ge 1-\delta$
\begin{eqnarray}
\mathbb P(R_1 \le \phi(R_2)) \ge \mathbb P(R_1 \ge \phi(R_3))-\delta.
\end{eqnarray}
\label{lm:proba}
\end{restatable}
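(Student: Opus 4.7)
The plan is a simple monotone-coupling argument on a high-probability event, followed by an elementary union bound. First I would introduce the event $A := \{R_2 \ge R_3\}$, which by the lemma's hypothesis satisfies $\mathbb P(A) \ge 1-\delta$, equivalently $\mathbb P(A^c) \le \delta$. This event captures the only stochastic obstruction to comparing $\phi(R_2)$ with $\phi(R_3)$.

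Next I would exploit the pathwise monotonicity of $\phi$: on $A$, since $R_2 \ge R_3$, one has $\phi(R_2) \ge \phi(R_3)$, so that
\begin{equation*}
A \cap \{R_1 \le \phi(R_3)\} \;\subseteq\; \{R_1 \le \phi(R_2)\},
\end{equation*}
because $R_1 \le \phi(R_3) \le \phi(R_2)$ forces $R_1 \le \phi(R_2)$. In the event that $\phi$ is itself random, I would first condition on $\phi$ and note that the inclusion holds for every realization of $\phi$; taking expectations over $\phi$ then preserves the inclusion at the level of probabilities.

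The conclusion would then be obtained by a one-line union bound:
\begin{equation*}
\mathbb P(R_1 \le \phi(R_2)) \;\ge\; \mathbb P\bigl(\{R_1 \le \phi(R_3)\} \cap A\bigr) \;\ge\; \mathbb P(R_1 \le \phi(R_3)) - \mathbb P(A^c) \;\ge\; \mathbb P(R_1 \le \phi(R_3)) - \delta.
\end{equation*}
This comparison is what is invoked in the proofs of Theorem \ref{thm:smoothboundary} and Theorem \ref{thm:almostconstantgrads}, where it is used to replace a $\min$ inside a margin threshold by its simpler $\alpha\varepsilon/2$ branch on the event $\{\|\nabla f(X)\| \ge \beta\}$. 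I do not anticipate any substantive obstacle: the only point of care is the joint measurability of $(R_1, R_2, R_3, \phi)$, which is standard, and the observation that the pathwise monotonicity argument is robust to randomizing $\phi$ independently of the triple $(R_1, R_2, R_3)$ or in any jointly measurable way.
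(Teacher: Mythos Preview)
Your argument is correct and is essentially identical to the paper's: both define the good event $\{R_2 \ge R_3\}$, use monotonicity of $\phi$ to obtain the inclusion $\{R_1 \le \phi(R_3)\}\cap\{R_2 \ge R_3\}\subseteq\{R_1 \le \phi(R_2)\}$, and then apply a union/inclusion--exclusion bound to lose at most $\delta$. The only cosmetic difference is that the paper writes the last step as $\mathbb P(E_1\cap E_2)\ge \mathbb P(E_1)+\mathbb P(E_2)-1$ whereas you phrase it as $\mathbb P(E_1\cap A)\ge \mathbb P(E_1)-\mathbb P(A^c)$.
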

\begin{proof}
Indeed, consider the events $E_1:=\{R_1 \le \phi(R_3)\}$, $E_2:=\{R_3 \le R_2\}$, $E_3:=E_1 \cap E_2$ and $E_4 := \{R_1 \le \phi(R_2)\}$. It is clear that $E_3 \subseteq E_4$. One then easily computes
\begin{eqnarray*}
\begin{split}
\bP(R_1 \le \phi(R_2)) = \bP(E_4) &\ge \bP(E_3) = \bP(E_1 \cap E_2)\\
&= \bP(E_1) + \bP(E_2) - \bP(E_1 \cup E_2)
\\
&\ge \bP(E_1) + \bP(E_2) - 1\\
&\ge \bP(E_1) - \delta\\
&= \bP(R_1 \le \phi(R_3)) - \delta,
\end{split}
\end{eqnarray*}
as claimed.
\end{proof}

\subsection{Proof of Theorem \ref{thm:smoothboundary}}
We are now ready to prove Theorem \ref{thm:smoothboundary}.
\begin{proof}[Proof of Theorem \ref{thm:smoothboundary}]
Let $x \in C':=\mathbb R^d\setminus C$ 
and set $v(x) := \Pi_V \nabla f(x)/\|\Pi_V \nabla f(x)\| \in \sphere \cap V$. Define $p_V(x) := \|\Pi_V \nabla f(x)\|$, the $L_2$-norm of the orthogonal projection of the gradient vector $\nabla f(x)$ onto the subspace $V$. It is clear that $\nabla f(x)^\top v(x) = \|\Pi_V \nabla f(x)\| = p_V(x)$. Let $d_V(x) \in (0,\infty]$ be the distance of $x$ from $C$ along the subspace $V$ (see \eqref{eq:distances}). By definition, $d_V(x)$ is no larger than the distance between $x$ and the point where the line $x + \mathbb R v(x) := \{x + sv(x) \mid s \in \mathbb R\}$ first meets $C$ (if it meets it at all!). Thus, with the convention $\inf \emptyset = \infty$, we have
\begin{eqnarray}
\begin{split}
    d_V(x) &\le \inf_{s \in \mathbb R}|s| \text{ subject to }x + sv(x) \in C\\
    &= \inf_{s \in \mathbb R} |s|\text{ subject to }f(x + sv(x)) \le 0\\
    &\le \inf_{s \in \mathbb R} |s| \text{ subject to }f(x) + s\nabla f(x)^\top v(x) + Ls^2/2 \le 0\\
    &= \inf_{s \in \mathbb R}|s|\text{ subject to }f(x) + p_V(x)s + Ls^2/2 \le 0,
\end{split}
\end{eqnarray}
where we have invoked the RHS of \eqref{eq:gradlipineq} with $x'=x+sv(x)$ to arrive at the third line.
\begin{eqnarray}
\begin{split}
f(x) \ge \sup_{|s| < d_V(x)}-p_V(x) s - Ls^2/2  &= \begin{cases}
p_V(x)^2/(2L),&\mbox{ if }p_V(x) \le Ld_V(x),\\
p_V(x)d_V(x)-Ld_V(x)^2/2,&\mbox{ otherwise,}
\end{cases}
\end{split}
\label{eq:branches}
\end{eqnarray}
where the second step is an application of Lemma \ref{lm:huber} with $n=1$, $b=-p_V(x)$, $r=1/L$ and $\rho = d_V(x)$.
\begin{figure}[!h]
    \centering
    \includegraphics[width=.8\linewidth]{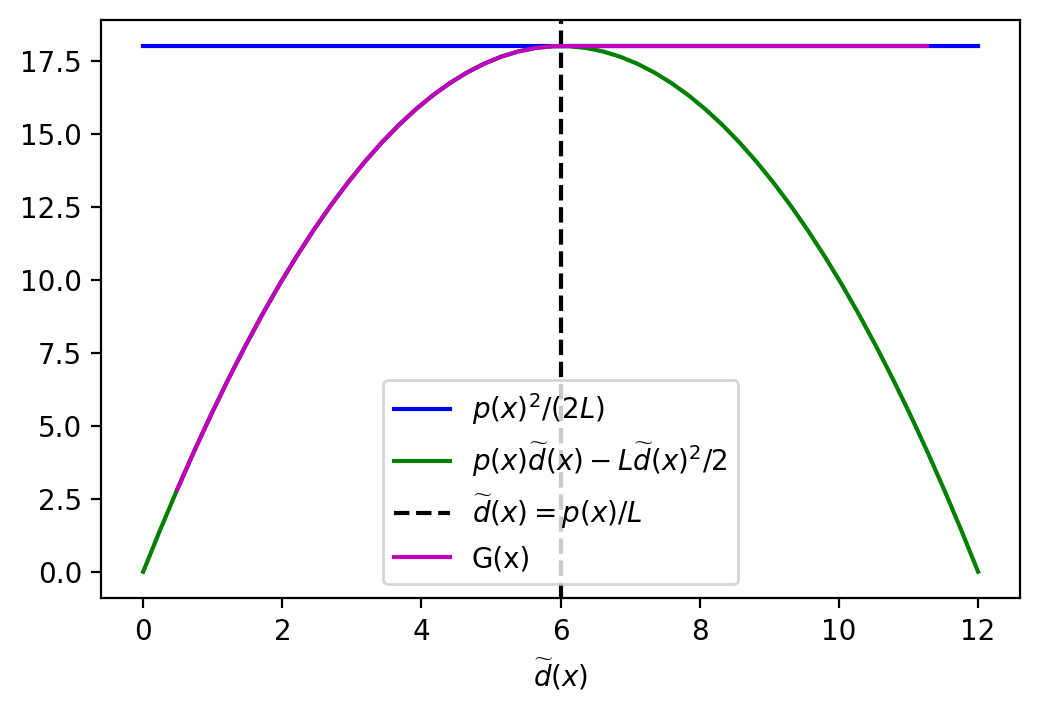}
    \caption{Graphical illustration of the RHS of \eqref{eq:branches}, denote here as $G(x)$. In this illustration, $p(x)=p_V(x)$ and $L$ are fixed to $5$ and $1$ respectively. Here, $\widetilde d(x)$ is shorthand for $d_V(x)$, the distance of $x$ from $C$ along the subspace $V$.}
    \label{fig:picture}
\end{figure}
Now, if $f(x) < p_V(x)^2/(2L)$, we deduce from \eqref{eq:branches} that $d_V(x) < p_V(x)/L$ and $f(x) \ge p_V(x)d_V(x)-Ld_V(x)^2/2$ (see Figure \ref{fig:picture} for geometric intuition), and so
\begin{eqnarray}
\begin{split}
d_V(x) &\le p_V(x)/L - \sqrt{(p_V(x)/L)^2 - 2f(x)/L} = \frac{2f(x)}{p_V(x) + \sqrt{p_V(x)^2-2f(x)L}}\\
&\le \frac{2f(x)}{p_V(x)} = 2\alpha_V(x)m_f(x),
\end{split}
\label{eq:quad}
\end{eqnarray}
where $\alpha_V(x) = p_V(x)/\|\nabla f(x)\| = \|\Pi_V \nabla f(x)\|/\|\nabla f(x)\| = \|\Pi_V \eta(x)\|$.

Because $C^\varepsilon_V = \{x \in \mathbb R^d \mid d_V(x) \le \varepsilon\}$, we deduce that
\begin{eqnarray}
\begin{split}
C^\varepsilon_V\setminus C \supseteq \left\{x \in C' \mid m_f(x) \le \min(\frac{\alpha_V(x)\varepsilon}{2},\frac{\alpha_V(x)^2\|\nabla f(x)\|}{2L})\right\}.
\end{split}
\end{eqnarray}

Now, define $s_V(x):=\alpha_V(x)^2\|\nabla f(x)\|/(2L)$ and $s(x):=\alpha^2\|\nabla f(x)\|/(2L)$. Since the subspace $V$ is an adversarial $(\alpha,\delta)$-viable by hypothesis, we have, it follows from Definition \ref{df:viableV} that
\begin{eqnarray}
\bP_{X,V}(s_V(X) \ge s(X) \mid X \in C') \ge \bP_{X,V}(\|\Pi_V \eta(X)\| \ge \alpha \mid X \in C') \ge 1-\delta.
\label{eq:sVbound}
\end{eqnarray}
The \emph{Fubini-Tonelli Theorem} then gives,
\begin{eqnarray*}
\begin{split}
\fr(V;\varepsilon) &:= \mathbb E_V \mathbb P_X(X \in C^\varepsilon_V \mid X \in C') = \mathbb E_X \mathbb P_V(X \in C^\varepsilon_V \mid X \in C')\\
&\ge \mathbb E_X\bP_V(m_f(X) \le \min(\alpha\varepsilon/2, s_V(X)) \mid X \in C')\\
&\ge \mathbb E_X\bP_V(m_f(X) \le \min(\alpha\varepsilon/2, s(X)) \mid X \in C')-\delta,
\end{split}
\end{eqnarray*}
where the last step is thanks to Lemma \ref{lm:proba} with $R_1=m_f(X)$, $R_2 = s_V(X)$, $R_3 = s(X)$, and $\phi=Id$, and recalling \eqref{eq:sVbound}.
This proves the first part of the theorem.

For the second part, Condition \ref{cond:beta} is in order and so we have $\mathbb P(\|\nabla f(X)\| \ge \beta \mid X \in C') \ge 1 - \gamma$.

\end{proof}

\subsection{Application}
\label{subsec:lipschitzexamples}
We provide a non-exhaustive list of examples to illustrate the power of Theorem \ref{thm:smoothboundary} and corollaries.
\paragraph{Half-space.}
This corresponds to taking $f(x) := x^\top w - b$, for some unit-vector $w \in \sphere$ and scalar $b \in \mathbb R$. One computes, $\nabla f(x) = w$, and $\nabla^2 f(x) = 0 \in \mathbb R^{d \times d}$ for any $x \in \mathbb R^d$, and so
    \begin{eqnarray}
    \begin{split}
L &= \sup_{x \in C'}\|\nabla^2 f(x)\|_{op} = 0,\,
\|\nabla f(x)\| = \|w\| = 1\text{ for all }x \in \mathbb R^d,\\
m_f(x) &:= \max(f(x),0)/\|\nabla f(x)\| = (x^\top w - b)_+\text{ for all }x \in \mathbb R^d.
\end{split}
\end{eqnarray}
    Theorem \ref{thm:smoothboundary} then recovers the concrete bounds previously established in Section \ref{subsec:warmup} for half-spaces.
    
\paragraph{Hyper-ellipsoid.} This corresponds to taking $f(x) := (x^\top B x - r^2)/2$, where $B$ is a $d \times d$ positive semi-definite matrix and $r > 0$ is a scalar. One computes $\nabla f(x) = Bx$, $\nabla^2 f(x) = B$, and so
    \begin{align}
    L &= \sup_{x \in C'}\|\nabla^2 f(x)\|_{op} = \|B\|_{op},\\
    \|\nabla f(x)\| &= \|Bx\|\text{ for all }x \in \mathbb R^d,,
    \beta =\inf_{x \in C'}\|\nabla f(x)\| = \inf_{x^\top B x > r^2}\|Bx\| = s_{\min}(B)^{1/2}r,\\
    m_f(x) &= \max(f(x),0)/\|\nabla f(x)\| = (x^\top B x - r^2)_+/(2\|Bx\|)_+, \text{ for all }x \in \mathbb R^d,
    \end{align}
    where $s_{\min}(B)$ is the smallest singular / eigenvalue of $B$.
In particular, if $B=I_d$ in the previous example, so that $C=rB_d \subseteq \mathbb R^d$ is the (origin-centered) closed ball of radius $r>0$, then we deduce $L=1$, $\beta=r$, Moreover, for any $x \in C'$, then the distance of $x$ from $C$ i $d(x) = \|x\|-r$ and we have
\begin{eqnarray}
m_f(x) = (\|x\|^2 - r^2)/\|x\| = (\|x\|-r)(1+r/\|x\|) \in (d(x),2d(x)),
\end{eqnarray}
for all $x \in C'$. Applying Theorem \ref{thm:smoothboundary} then recovers the concrete bounds established in \cite[Lemma 2.3]{guo2020threats} as a special case.

\subsection{Proof of Theorem \ref{thm:upperbound}: An upper-bound (tightness of Theorem \ref{thm:smoothboundary})}
\label{subsec:upperboundproof}
As usual, we restate the result for easier reference.
\upperbound*
\begin{proof}
By definition of $d_V(x)$ (see \eqref{eq:distances}), it is clear that $x - d_V(x) v \in C$, where $v = \Pi_V \nabla f(x) / \|\Pi_V \nabla f(x)\|$. Observe that $\nabla f(x)^\top v = \|\Pi_V \nabla f(x)\|$. Now, thanks to the convexity of $f$, we have
\begin{eqnarray}
\begin{split}
    x - d_V(x) v \in C &\implies f(x - d_V(x) v) \le 0 \implies f(x) - d_V(x)\nabla f(x)^\top v \le 0 \\
    &\implies m_f(x) \le \frac{d_V(x)\nabla f(x)^\top v}{\|\nabla f(x)\|} \le \frac{d_V(x)\|\Pi_V \nabla f(x)\|}{\|\nabla f(x)\|} \le \widetilde\alpha d_V(x).
\end{split}
\end{eqnarray}
Thus, $\{x \in C' \mid m_f(x) \le \widetilde\alpha \varepsilon\} \supseteq \{x \in C' \mid d_V(x) \le \varepsilon\} =: C^\varepsilon_V\setminus C$, and the result follows.
\end{proof}

\subsection{Proof of Theorem \ref{thm:almostconstantgrads}: Locally affine decision-region}
\label{subsec:almostconstantgradsproof}
\almostconstantgrads*
We will need the following auxiliary lemma.
\begin{restatable}{lm}{softthres}
\label{lm:softthres}
For any $r,\rho > 0$ and $b \in \mathbb R^d$, we have the identity
\begin{eqnarray}
\sup_{z \in \rho B_n}b^\top z - \frac{1}{r}\|z\| = \rho(\|b\|-1/r)_+.
\end{eqnarray}
\end{restatable}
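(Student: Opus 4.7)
The plan is to reduce the $d$-dimensional supremum to a one-dimensional optimization by exploiting the fact that both terms in the objective depend on $z$ only through its norm and its inner product with a single direction. Writing $z = t\hat z$ with $t = \|z\| \in [0,\rho]$ and $\hat z \in S^{d-1}$ (the case $z = 0$ being absorbed by taking $t = 0$), the objective $b^\top z - \|z\|/r$ factorizes as $t(b^\top \hat z - 1/r)$.

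For fixed $t > 0$, I would first maximize over $\hat z$. By Cauchy--Schwarz, $\sup_{\hat z \in S^{d-1}} b^\top \hat z = \|b\|$, attained at $\hat z = b/\|b\|$ when $b \neq 0$ (if $b=0$ the objective is $-t/r \le 0$, maximized at $t=0$, and the claimed identity is $0 = \rho \cdot 0_+$, so this edge case is handled). The reduced problem is then $\sup_{t \in [0,\rho]} t(\|b\| - 1/r)$, which is linear in $t$: its maximum is $\rho(\|b\|-1/r)$ if $\|b\| \ge 1/r$ (attained at $t=\rho$), and $0$ if $\|b\| < 1/r$ (attained at $t=0$). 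Merging the two cases gives $\rho(\|b\|-1/r)_+$, as claimed.

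Alternatively, one could mirror the argument used in the proof of Lemma~\ref{lm:huber}: dualize $\|z\| = \sup_{\|u\| \le 1} u^\top z$, apply Sion's minimax theorem to swap the sup in $z \in \rho B_d$ with the inf in $u$, and recognize the resulting expression $\rho \inf_{\|u\| \le 1}\|b - u/r\|$ as $\rho$ times the distance from the origin to the closed ball of radius $1/r$ centered at $b$, which equals $\rho(\|b\|-1/r)_+$. This route keeps the derivation uniform with the previous lemma but is strictly longer than the direct approach above.

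There is no substantial obstacle: the identity is an elementary exercise, and the only mild care needed is in the degenerate case $b = 0$ and in noting that the linear coefficient $\|b\| - 1/r$ changes sign, which is precisely what produces the positive-part truncation in the final answer.
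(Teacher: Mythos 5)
Your primary argument is correct and takes a genuinely different, more elementary route than the paper. The paper dualizes the norm term as $\|z\| = \sup_{\|u\|\le 1} z^\top u$, invokes a minimax swap (as in its proof of Lemma~\ref{lm:huber}), arrives at $\rho\inf_{\|u\|\le 1}\|b - u/r\|$, and then recognizes the Euclidean soft-thresholding operator. You instead decompose $z = t\hat z$ with $t=\|z\|\in[0,\rho]$ and $\hat z$ a unit vector, note the objective factors as $t(b^\top\hat z - 1/r)$, maximize over $\hat z$ by Cauchy--Schwarz to get $t(\|b\|-1/r)$, and then solve the one-dimensional linear program over $t\in[0,\rho]$, which produces the positive-part truncation directly. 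Your route avoids any appeal to Sion's minimax theorem or Fenchel conjugacy and is entirely self-contained and arguably shorter; the paper's route has the advantage of being structurally parallel to its proof of Lemma~\ref{lm:huber}, which is why the authors likely chose it. You also correctly note the degenerate case $b=0$ (where the answer is $0 = \rho\cdot 0_+$) and that $t=0$ is subsumed in the one-dimensional maximization, so no gap remains. Your sketched ``alternative'' route is exactly what the paper does.
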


\begin{proof}
By direct computation, we have
\begin{eqnarray*}
\begin{split}
\sup_{\|z\| \le \rho}b^\top z - \frac{1}{r}\|z\| &= \sup_{\|z\| \le \rho}b^\top z-\sup_{\|u\| \le 1}z^\top u/r\\
&= \inf_{\|u\| \le 1}\sup_{\|z\| \le \rho}z^\top(b-u/r)\\
& = \rho \inf_{\|u\| \le 1}\|b-u/r\|\\
&= \rho (\|b\|-1/r)_+,
\end{split}
\end{eqnarray*}
we in the last step, we have recognized the well-known Euclidean \emph{soft-thresholding} operator.
\end{proof}

\begin{proof}[Proof of Theorem \ref{thm:almostconstantgrads}]
Under Condition \ref{cond:almostconstgrad}, it is easy to establish the classical inequality
\begin{eqnarray}
-\theta\|x'-x\| \le f(x') - f(x) - \nabla f(x)^\top (x'-x) \le \theta\|x'-x\|,\text{ for all }\|x'-x\| \le R.
\label{eq:theta}
\end{eqnarray}
Now, let $x \in C' := \mathbb R^d \setminus C$ and let $d_V(x)$ be the distance of $x$ from $V$ along the subspace $V$. Let $v(x)$, $p_V(x)$, $\alpha_V(x)$, $s_V(x)$, and $s(x)$ be as defined in the proof of Theorem \ref{thm:smoothboundary}. By an argument analogous to the beginning of the proof of Theorem \ref{thm:smoothboundary} but with \eqref{eq:theta} used in place of \eqref{eq:gradlipineq} and the restriction that $|s| \le R$ so that \eqref{eq:theta} is valid for every $x'$ on the line $x+\mathbb Rv(x)$, it is straightforward to establish that
\begin{eqnarray}
\begin{split}
d_V(x) &\le \inf_{s \in \mathbb R}|s| \text{ subject to }x+sv(x) \in C,\,|s| \le R\\
&\le \inf_{s \in \mathbb R}|s| \text{ subject to }f(x) + p_V(x)s + \theta|s|\le 0,\,|s| \le R\\
&\le \inf_{s \in \mathbb R}|s| \text{ subject to }f(x) + p_V(x)s + \theta|s|\le 0,\,|s| \le R.
\end{split}
\end{eqnarray}
We deduce that
\begin{eqnarray}
\begin{split}
f(x) &\ge \sup_{|s| < \min(d_V(x),R)}-p_V(x)s - \theta|s| = \min(d_V(x),R)\cdot (p_V(x)-\theta)_+,
\end{split}
\label{eq:xyz}
\end{eqnarray}
where the equality is thanks to Lemma \ref{lm:softthres} applied with $n=1$, $b=-p_V(x)$, $r=1/\theta$, and $\rho = \min(d_V(x),R)$.
Thus, we deduce from \eqref{eq:xyz} that
\begin{eqnarray}
\begin{split}
\min(d_V(x),R) \le \frac{f(x)}{(p_V(x)-\theta)_+} =  \frac{1}{\overline \alpha_V(x)}m_f(x),
\end{split}
\end{eqnarray}
with $\overline\alpha_V(x) := \|\nabla f(x)\|/(\alpha_V(x)\|\nabla f(x)\|-\theta)_+$.
Thus, if $m_f(x) \le \overline \alpha_V(x)\varepsilon$ and $\varepsilon < R$, then $d_V(x) \le \varepsilon$.
The rest of the proof is analogous to the case of Theorem \ref{thm:smoothboundary}, as is thus omitted.
\end{proof}

\subsection{Proof of Corollary \ref{cor:relu}: ReLU neural networks}
\relu*
The result is obtained as a consequence of Theorem \ref{thm:almostconstantgrads}, by combining Lemma 2.2 and Lemma 2.8 of \cite{bartlett2021} and the following lemma
 \begin{restatable}{lm}{}
 \label{lm:bartlett}
 Suppose Condition \ref{cond:bottleneck} is in order. Then, w.h.p. over the random intermediate weights $W_1,\ldots,W_{M-1}$, the ReLU neural network $f_{\mathrm{relu}}$ satisfies Conditions \ref{cond:beta} and \ref{cond:almostconstgrad} with
\begin{align}
R=R_{\mathrm{relu}} &\gtrsim (\log d_{\max})^{40 M},\\
\theta = \theta_{\mathrm{relu}} &\lesssim \|a\|/(\log d_{\max})^M
\label{eq:lipchainrule},\\
\beta=\beta_{\mathrm{relu}} &= \inf_{x \in C'}\|\nabla f_{\mathrm{relu}}(x)\| \ge  \inf_{x \in \mathbb R^d}\|\nabla f_{\mathrm{relu}}(x)\| \gtrsim \|a\|\label{eq:daniely}.
\end{align}
 \end{restatable}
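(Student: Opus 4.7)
The plan is to reduce this statement to two high-probability facts about randomly-initialized feedforward ReLU networks that are already established in \cite{bartlett2021}, and then check that they translate directly into Conditions \ref{cond:beta} and \ref{cond:almostconstgrad} with the claimed parameters. Since Condition \ref{cond:bottleneck} (bounded depth and $d_{\min} \gtrsim (\log d_{\max})^{40M}$) is exactly the regime in which those lemmas are stated, I would not re-prove them but invoke them as black boxes.

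First, for the strong-gradient bound $\beta_{\mathrm{relu}}$: I would invoke Lemma 2.2 of \cite{bartlett2021}, which asserts that w.h.p.\ over $W_1,\ldots,W_{M-1}$ one has the \emph{uniform} lower bound $\inf_{x \in \mathbb R^d}\|\nabla f_{\mathrm{relu}}(x)\| \gtrsim \|a\|$. Since $C' \subseteq \mathbb R^d$, monotonicity of the infimum immediately gives $\inf_{x \in C'}\|\nabla f_{\mathrm{relu}}(x)\| \ge \inf_{x \in \mathbb R^d}\|\nabla f_{\mathrm{relu}}(x)\| \gtrsim \|a\|$. Because the inequality is uniform, it establishes Condition \ref{cond:beta} with the stronger version $\gamma=0$ and $\beta = \beta_{\mathrm{relu}} \gtrsim \|a\|$.

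Second, for the oscillation bound: I would invoke Lemma 2.8 of \cite{bartlett2021}, which states that w.h.p.\ over the intermediate weights, every pair $x,x' \in \mathbb R^d$ with $\|x'-x\| \le R_{\mathrm{relu}} \gtrsim (\log d_{\max})^{40M}$ satisfies $\|\nabla f_{\mathrm{relu}}(x') - \nabla f_{\mathrm{relu}}(x)\| \lesssim \|a\|/(\log d_{\max})^M$. Restricting the pair $(x,x')$ to $C' \subseteq \mathbb R^d$ preserves the bound, which is exactly Condition \ref{cond:almostconstgrad} with $R = R_{\mathrm{relu}}$ and $\theta = \theta_{\mathrm{relu}}$ of the claimed orders.

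The only substantive step beyond citing these two results is a union bound: both high-probability events must hold simultaneously. Since each fails with probability $o(1)$ under Condition \ref{cond:bottleneck}, their intersection still holds w.h.p., and on this intersection the ReLU network $f_{\mathrm{relu}}$ satisfies Conditions \ref{cond:beta} and \ref{cond:almostconstgrad} with the three parameters claimed. The main obstacle is really just bookkeeping --- verifying that the width requirement $d_{\min} \gtrsim (\log d_{\max})^{40M}$ is simultaneously sufficient for the hypotheses of both Lemma 2.2 and Lemma 2.8 of \cite{bartlett2021} --- which is indeed the case since those lemmas were formulated under precisely this bottleneck condition, with the exponent $40M$ chosen to control the propagation of the Lipschitz-type estimates through all $M-1$ intermediate ReLU layers via the product $\prod_{\ell=1}^{M-1}\|W_\ell\|_{op}$ and the activation-pattern stability argument of \cite{bartlett2021}.
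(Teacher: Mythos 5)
Your proposal is correct and takes essentially the same approach as the paper: the paper presents Lemma \ref{lm:bartlett} as following directly from Lemma 2.2 and Lemma 2.8 of \cite{bartlett2021} under Condition \ref{cond:bottleneck}, with no further proof supplied, and you have filled in exactly that reasoning, including the bookkeeping steps (monotonicity of the infimum over $C' \subseteq \mathbb R^d$ and the union bound to intersect the two high-probability events).
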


\section{Proof of Theorem \ref{thm:steiner}: Compact decision-regions}
\steiner*
We will prove Theorem \ref{thm:steiner} by reducing to the case of balls, and then invoking Theorem \ref{thm:ballcase}.



\begin{restatable}[Iso-volumentric radius]{df}{}
The iso-volumetric radius
of a measurable subset $K$ of $\mathbb R^d$, denoted $R(K)$, is the unique $r
\in [0,\infty]$ such that $K$ has the same volume as the ball $B_d(r)$.
i.e.,
\begin{eqnarray*}
R(K) := (\frac{\vol(K)}{\omega_d})^{1/d} \approx \sqrt{\frac{2d}{\pi e}}\vol(K)^{1/d},
\end{eqnarray*}
where $\omega_d = \pi^{d/2}/\Gamma(d/2+1)$ is the volume of the unit-ball $B_d$.
   \end{restatable}
  For example, the hypercube $[a,b]^d$ has iso-volumetric radius $R([a,b]^d) = (b-a)\sqrt{2d/(\pi e)}$, any unbounded $K$ has $R(K) = \infty$, and of course it holds that $R(B_d(r)) = r$.

\begin{proof}[Proof of Theorem \ref{thm:steiner}]
  For a uniformly-random  $v \in B_d$, one computes the average fooling rate of $v$ as
  $\E_v [\fr(v;\varepsilon)] \ge \E_v\mathbb P_X(X + \varepsilon v \in C \mid X \in C') = 1-\tau_{C'}(\varepsilon)$, where
  \begin{eqnarray}
  \label{eq:tau}
  \tau_{K}(\varepsilon) := \E_v \dfrac{\vol(K \cap (\varepsilon v +
      K))}{\vol(K)} \in [0,1].
\end{eqnarray}
It is clear that $\tau_K(\varepsilon) = \tau_{K/\varepsilon}(\varepsilon)$ for any compact subset $K$ of $\mathbb R^d$ and for any $t \ge 0$.
  The result then follows from Theorem \ref{thm:ballcase}. Invoking Lemma \ref{lm:fedja} below then gives
  $$
  \tau_{C'}(\varepsilon) = \tau_{C'/R(C')}(\varepsilon/R(C')) \le \tau_{B_d}(\varepsilon/R(C')),
  $$
  and the result follows directly from \eqref{thm:ballcase}.
 \end{proof}
 
 \begin{restatable}[$\tau$ is maximized by balls]{lm}{fedja}
 \label{lm:fedja}
 Let $\tau$ be the function defined in \eqref{eq:tau}. Then, for every $\varepsilon \ge 0$ and every compact subset $K$ of $\mathbb R^d$, it holds that $\tau_K(\varepsilon) \le \tau_{B(R(K))}(\varepsilon)$.
 \end{restatable}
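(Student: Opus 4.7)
The plan is to reduce the claim to a direct application of the classical Riesz--Sobolev rearrangement inequality, as already hinted at in the paper. The key observation is that $\omega_d\,\varepsilon^d\,\vol(K)\,\tau_K(\varepsilon)$ can be expressed as a triple integral of three indicator functions, at which point the two copies of $\mathbf{1}_K$ can be replaced by their symmetric decreasing rearrangements at no loss.

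First I would unfold the definition. With $v$ uniform on $B_d$ and the change of variable $w=\varepsilon v$,
\begin{align*}
\omega_d\,\varepsilon^d\,\vol(K)\,\tau_K(\varepsilon)
&= \int_{\mathbb R^d}\mathbf{1}_{\varepsilon B_d}(w)\,\vol\!\bigl(K\cap (K+w)\bigr)\,dw\\
&= \iint_{\mathbb R^d\times\mathbb R^d}\mathbf{1}_{\varepsilon B_d}(w)\,\mathbf{1}_K(y)\,\mathbf{1}_K(y-w)\,dw\,dy.
\end{align*}
Setting $x=y-w$ in the inner integral, then $u=-x$, and using that $\mathbf{1}_{\varepsilon B_d}$ is even, this becomes
\begin{equation*}
\iint_{\mathbb R^d\times\mathbb R^d}\mathbf{1}_{-K}(u)\,\mathbf{1}_K(y)\,\mathbf{1}_{\varepsilon B_d}(u+y)\,du\,dy,
\end{equation*}
which is the standard Riesz--Sobolev template.

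Second, I would invoke Riesz--Sobolev: for non-negative integrable $f,g,h$ on $\mathbb R^d$,
\begin{equation*}
\iint f(x)\,g(y)\,h(x+y)\,dx\,dy \;\le\; \iint f^{\ast}(x)\,g^{\ast}(y)\,h^{\ast}(x+y)\,dx\,dy,
\end{equation*}
where $(\cdot)^{\ast}$ denotes the symmetric decreasing rearrangement. In our case $(\mathbf{1}_{-K})^{\ast}=(\mathbf{1}_K)^{\ast}=\mathbf{1}_{B(R(K))}$, since $\vol(-K)=\vol(K)=\omega_d R(K)^d$, and $(\mathbf{1}_{\varepsilon B_d})^{\ast}=\mathbf{1}_{\varepsilon B_d}$ because $\varepsilon B_d$ is already an origin-centered ball. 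Running the two substitutions in reverse, the right-hand side of Riesz--Sobolev is exactly $\omega_d\,\varepsilon^d\,\vol(B(R(K)))\,\tau_{B(R(K))}(\varepsilon)$, and because $\vol(K)=\vol(B(R(K)))$ by definition of $R$, the common prefactor cancels, yielding $\tau_K(\varepsilon)\le\tau_{B(R(K))}(\varepsilon)$.

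The only subtle point is purely algebraic: Riesz--Sobolev is phrased with $h(x+y)$, while the integrand here naturally involves $h(y-w)$, so a reflection $K\leftrightarrow -K$ is needed to line things up. This reflection is harmless because $(-K)^{\ast}=K^{\ast}$. Beyond this bookkeeping, the lemma is essentially an immediate consequence of Riesz--Sobolev, so I do not anticipate any deeper obstacle.
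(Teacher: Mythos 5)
Your proof is correct and follows essentially the same route as the paper: the paper factors the argument through an intermediate rearrangement inequality (Lemma \ref{lm:riez}) for $T(K_1,K_2,K_3)=\int_{K_1}\vol(K_2\cap(u+K_3))\,du$ and then specializes to $K_1=B_d(\varepsilon)$, $K_2=K_3=K$, whereas you unfold $\tau_K$ into the same triple integral of indicators and apply Riesz--Sobolev directly. The reflection $K\leftrightarrow -K$ you flag is indeed the only bookkeeping subtlety, and your observation that $(\mathbf 1_{-K})^\ast=(\mathbf 1_K)^\ast$ disposes of it correctly.
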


The lemma is a special case of the following general result.

\begin{restatable}[A rearrangement inequality]{lm}{}
\label{lm:riez}
For nonempty compact subsets $K_1$, $K_2$, $K_3$ of $\mathbb R^d$, and define  $T(K_1,K_2,K_3)$ by
\begin{eqnarray}
T(K_1,K_2,K_3) := \int_{K_1}\vol(K_2 \cap (u+K_3))\,\mathrm{d}x.
\end{eqnarray}
Then, the following inequality holds
\begin{eqnarray}
T(K_1,K_2,K_3) \le T(B_d(R(K_1)), B_d(R(K_2)), B_d(R(K_3))),
\label{eq:riez}
\end{eqnarray}
where, as usual, $B_d(R(K))$ is the centered ball of radius $R(K)$, which has the same volume as $K$.
\end{restatable}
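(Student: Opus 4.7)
The plan is to apply the classical \emph{Riesz-Sobolev rearrangement inequality} essentially as a black box, after rewriting $T(K_1,K_2,K_3)$ as a symmetric three-function integral in canonical form.

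First I would unpack the inner volume using indicator functions: since $\vol(K_2 \cap (u+K_3)) = \int_{\mathbb R^d} \mathbf{1}_{K_2}(x)\,\mathbf{1}_{K_3}(x-u)\,\mathrm{d}x$, Fubini gives
$$T(K_1,K_2,K_3) = \iint_{\mathbb R^d \times \mathbb R^d} \mathbf{1}_{K_1}(u)\,\mathbf{1}_{K_2}(x)\,\mathbf{1}_{K_3}(x-u)\,\mathrm{d}x\,\mathrm{d}u.$$
The change of variables $(u,x) \mapsto (a,c) := (u, x-u)$ has unit Jacobian and recasts this as
$$T(K_1,K_2,K_3) = \iint_{\mathbb R^d \times \mathbb R^d} \mathbf{1}_{K_1}(a)\,\mathbf{1}_{K_2}(a+c)\,\mathbf{1}_{K_3}(c)\,\mathrm{d}a\,\mathrm{d}c,$$
which is exactly the canonical Riesz-Sobolev form applied to the nonnegative integrands $f = \mathbf{1}_{K_1}$, $g = \mathbf{1}_{K_2}$, $h = \mathbf{1}_{K_3}$.

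Next I would invoke the Riesz-Sobolev inequality, which asserts that the above integral can only increase when each of $f,g,h$ is replaced by its symmetric decreasing rearrangement $f^\ast, g^\ast, h^\ast$. By the very definition of the iso-volumetric radius $R(\cdot)$, the rearrangement of $\mathbf{1}_K$ is the indicator $\mathbf{1}_{B_d(R(K))}$ of the origin-centred ball with volume $\vol(K)$. Substituting back and reversing the change of variables yields
$$T(K_1,K_2,K_3) \le \iint \mathbf{1}_{B_d(R(K_1))}(u)\,\mathbf{1}_{B_d(R(K_2))}(x)\,\mathbf{1}_{B_d(R(K_3))}(x-u)\,\mathrm{d}x\,\mathrm{d}u,$$
and the right-hand side is exactly $T(B_d(R(K_1)),B_d(R(K_2)),B_d(R(K_3)))$ by retracing the first step.

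No step is a serious obstacle; the only care required is the bookkeeping in the change of variables and verifying that indicators of compact sets qualify as inputs to Riesz-Sobolev (they are bounded and compactly supported, hence in every $L^p$). The entire argument is essentially a one-line invocation of the classical inequality, and the statement of Lemma \ref{lm:fedja} follows at once by specializing $K_1 = -K$ and $K_2 = K_3 = K$ (or equivalently by applying the same argument to the expectation \eqref{eq:tau}).
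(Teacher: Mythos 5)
Your proof is correct and follows essentially the same route as the paper's: rewrite $T$ via indicator functions and Fubini to land in the canonical Riesz--Sobolev form, invoke the inequality, and observe that $(\mathbf{1}_K)^\star=\mathbf{1}_{B_d(R(K))}$ by definition of the iso-volumetric radius; the additional change of variables $(u,x)\mapsto(u,x-u)$ is harmless but unnecessary, since the Fubini'd double integral is already in the required bilinear convolution form. The only slip is in the closing aside about Lemma~\ref{lm:fedja}: the correct specialization is $K_1=\varepsilon B_d$ (already a centered ball, hence fixed by rearrangement), $K_2=K_3=K$, followed by normalizing by $\omega_d\,\vol(K)\,\varepsilon^d$, not $K_1=-K$.
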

The proof of Lemma \ref{lm:fedja} is obtained from \eqref{eq:riez} by taking $K_1=B_d(\varepsilon)$, the ball of radius $\varepsilon$; $K_3=K_2$; and then normalizing by the volume of the unit-ball $B_d$, namely $\omega_d$.

Lemma \ref{lm:riez} itself is a consequence of the celebrated \emph{Riesz-Sobolev rearrangement inequality}, which we state below for completeness.
\begin{restatable}[The Riesz-Sobolev rearrangement inequality \citep{riez}]{prop}{}
\label{prop:riez}
Let $g_1$, $g_2$, and $g_3$ be nonnegative real-valued functions on $\mathbb R^d$ which vanish at infinity, i.e., $\limsup_{|z| \to \infty}g_i(z) = 0$ for $i=1,2,3$. Then, the following inequality holds
\begin{eqnarray}
\int_{\mathbb R^d}\int_{\mathbb R^d}g_1(x)g_3(x-y)g_2(y)\,\mathrm dx\mathrm dy \le \int_{\mathbb R^d}\int_{\mathbb R^d}g_1^\star(x)g_3^\star(x-y)g_2^\star(y)\,\mathrm dx\mathrm dy,
\end{eqnarray}
where $g^\star$ is the symmetric decreasing rearrangement of $g$, i.e., the unique nonnegative real-valued function on $\mathbb R^d$ such that for every $t\ge 0$, the subset $(g^\star)^{-1}([t,\infty)) := \{x \in \mathbb R^d \mid g^\star(x) \ge t\}$ is a centered ball of the same volume as $g^{-1}([t,\infty))$.
\end{restatable}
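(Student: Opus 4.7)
\textbf{Proof plan for Proposition \ref{prop:riez}.} My plan is to follow the standard strategy of reducing to indicator functions via the layer-cake formula and then exploiting Steiner symmetrization. First, using $g_i(x)=\int_0^\infty \mathbf 1_{\{g_i>t\}}(x)\,dt$ and Fubini, both sides of the target inequality decompose into triple integrals over $(t_1,t_2,t_3)\in\mathbb R_+^3$ of the functional
\[
I(A_1,A_2,A_3):=\int_{\mathbb R^d}\!\int_{\mathbb R^d} \mathbf 1_{A_1}(x)\,\mathbf 1_{A_3}(x-y)\,\mathbf 1_{A_2}(y)\,\mathrm dx\,\mathrm dy,
\]
evaluated at the super-level sets $A_i=\{g_i>t_i\}$. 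Since $\{g_i^\star>t\}$ is by definition the centered ball with the same volume as $\{g_i>t\}$, it suffices to prove $I(A_1,A_2,A_3) \le I(A_1^\star,A_2^\star,A_3^\star)$ for any finite-measure measurable sets $A_1,A_2,A_3 \subseteq \mathbb R^d$, where $K^\star$ denotes the centered ball of the same volume as $K$.

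For this set-version, I would rely on \emph{Steiner symmetrization}. For a unit vector $u\in\sphere$, let $S_u K$ be the set whose intersection with each line parallel to $u$ is a symmetric interval on that line, centered on the hyperplane $u^\perp$ and of the same one-dimensional length as $K$'s intersection with the line. The first key claim is the monotonicity
\[
I(A_1,A_2,A_3) \le I(S_u A_1, S_u A_2, S_u A_3).
\]
Slicing along hyperplanes orthogonal to $u$ (Fubini) reduces this to the \emph{one-dimensional Riesz-Sobolev inequality}: for finite-measure $E,F,G \subseteq \mathbb R$,
\[
\int\!\!\int \mathbf 1_E(s)\,\mathbf 1_G(s-t)\,\mathbf 1_F(t)\,\mathrm ds\,\mathrm dt \;\le\; \int\!\!\int \mathbf 1_{E^\star}(s)\,\mathbf 1_{G^\star}(s-t)\,\mathbf 1_{F^\star}(t)\,\mathrm ds\,\mathrm dt.
\]
This 1D fact is the combinatorial heart: I would prove it by rewriting the left-hand side as $\int \mathbf 1_G(s)\,\varphi(s)\,\mathrm ds$ with $\varphi(s):=|E \cap (s+F)|$, then using the bathtub principle (to maximize $\int\mathbf 1_G\varphi$ subject to $|G|$ fixed, concentrate $G$ on the largest values of $\varphi$) together with the observation that $\varphi^\star(s) \le |E^\star \cap (s+F^\star)|$ pointwise, the latter being a symmetric triangular function.

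Finally, I would invoke the classical fact that for every finite-measure set $K\subseteq\mathbb R^d$ there exists a sequence of directions $u_1, u_2, \ldots \in \sphere$ such that the iterates $S_{u_n}\cdots S_{u_1}K$ converge in $L^1$ (equivalently, in measure on a bounded container) to $K^\star$, and that a common subsequence can be extracted (via compactness in the Hausdorff/measure sense and the rotational invariance of the target $K^\star$) working simultaneously for $A_1, A_2, A_3$. Combining this with the monotonicity under each $S_u$ and passing to the limit yields $I(A_1,A_2,A_3)\le I(A_1^\star,A_2^\star,A_3^\star)$, and undoing the layer-cake reduction proves the proposition. The main obstacles I anticipate are the one-dimensional Riesz-Sobolev inequality sketched above and the joint convergence of iterated Steiner symmetrizations; the remaining reductions (layer-cake, Fubini) are routine measure-theoretic manipulations.
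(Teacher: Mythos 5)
The paper does not prove Proposition~\ref{prop:riez}: it is the classical Riesz--Sobolev rearrangement inequality, which the paper cites from \citep{riez} as a known black box and uses only to derive the set-level Lemma~\ref{lm:riez}. Your plan is therefore an attempt to reprove a classical result that the paper simply invokes. The skeleton you propose---layer-cake reduction to indicator functions, monotonicity of the triple functional $I$ under Steiner symmetrization, Fubini slicing to reduce that monotonicity to the one-dimensional inequality, and convergence of iterated symmetrizations to the ball---is indeed the standard textbook route (cf.\ Lieb--Loss, \emph{Analysis}, Theorem~3.7), and the slicing step is fine.

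However, your proof of the one-dimensional base case contains a genuine error. You assert the pointwise bound $\varphi^\star(s) \le \psi(s) := |E^\star \cap (s+F^\star)|$, where $\varphi(s) := |E \cap (s+F)|$. This is false in general: the support of $\varphi$ is the difference set $E - F$, whose measure can strictly exceed $|E|+|F| = |\mathrm{supp}\,\psi|$ when $E$ or $F$ is disconnected, so $\varphi^\star > 0 = \psi$ on an outer annulus. Concretely, take $E=[0,1]$ and $F=[0,1]\cup[2,3]$. Then $\varphi$ consists of two unit-height triangles supported on a set of total measure $4$, so $\varphi^\star$ is supported on $(-2,2)$, while $\psi$ is a trapezoid supported on $(-3/2,3/2)$; thus $\varphi^\star(s) > \psi(s)=0$ for $3/2 < |s| < 2$ (and more generally $|\{\varphi>t\}| = 4-4t > 3-2t = |\{\psi>t\}|$ for $t<1/2$). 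What is actually true is only the integrated majorization $\int_{-c}^{c}\varphi^\star \le \int_{-c}^{c}\psi$ for every $c>0$, and establishing this is not a corollary of the bathtub principle---it is essentially the entire content of the one-dimensional Riesz inequality, which the classical proofs obtain instead by approximating $E,F,G$ by finite unions of intervals and a sliding argument. Your handling of the convergence of iterated Steiner symmetrizations is also looser than needed (one wants a single fixed sequence of directions that drives \emph{all three} sets to balls simultaneously, not an extracted common subsequence), but that is a repairable technicality; the one-dimensional step is the substantive gap.
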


\begin{proof}[Proof of Lemma \ref{lm:riez}]
Let $1_{K}$ denote indicator function of a compact set $K$. Compactness implies that $1_{K}$ vanishes at infinity. Notice that we can rewrite  $T(K_1,K_2,K_3) = \widetilde T(1_{K_1},1_{K_2},1_{K_2})$, where
\begin{align}
\widetilde T(g_1,g_2,g_2) = \int_{\mathbb R^d}\int_{\mathbb R^d} g_1(x)g_3(x-y)g_2(y)\,\mathrm dx\mathrm dy.
\end{align}
Now, by Proposition \ref{prop:riez} above, we know that $\widetilde T(g_1,g_2,g_3) \le \widetilde T(g_1^\star,g_2^\star,g_3^\star)$, where $g^\star$ is the symmetric decreasing rearrangement of the function $g$. It then suffices to observe that by definition, $(1_{K})^{\star} = 1_{B_d(R(K))}$. This completes the proof of the lemma.
\end{proof}

\section{Details of experimental setup}
\subsection{Empirical estimation of gradient eigen-subspaces}
Let $x_1,\ldots,x_n \in \mathbb R^d$ be iid samples from $\mathbb P_{X \mid X \in C'}$, the distribution of the data conditioned on the positive decision-region of the classifier, and let $J$ be the $n \times d$ matrix with $i$th row given by $\eta(x_i) := \nabla f(x_i)/\|\nabla f(x_i)\| \in \sphere$. \cite{unipert,singularart} have provided strong empirical evidence that the subspace spanned by the first top eigenvectors of the matrix of $\widehat{\Sigma}_\eta :=J^\top J/n$ contains successful adversarial perturbations. In fact, the one-dimensional subspace spanned by the top eigenvector of $\widehat{\Sigma}_\eta$ was shown in \cite{singularart} to achieve state-of-the-art performance, on a variety of models and datasets. In the following Theorem, we provide a rigorous explanation for the success of these SVD-based heuristics used in \cite{unipert,singularart} to compute UAPs.

\begin{restatable}{rmk}{}
We ignore issues concerning the consistency of approximating the principal eigenvector $\Sigma_\eta$ with that of $\widehat{\Sigma}_\eta$, used in practice \citep{unipert,singularart}.
\end{restatable}


\end{document}